\documentclass[12pt]{article}
\usepackage{fullpage}

\usepackage{enumitem}
\usepackage[colorlinks,linkcolor=blue,citecolor=blue]{hyperref}
\usepackage[round]{natbib}
\usepackage{bm}
\usepackage[ruled,noline]{algorithm2e}

\usepackage{tikz}
\usetikzlibrary{calc}

\title{Online Learning with Feedback Graphs: \\ Beyond Bandits}


\author{%
Noga Alon%
\footnote{Tel Aviv University, Tel Aviv, Israel, and Microsoft Research, Herzliya, Israel, \email{nogaa@post.tau.ac.il}.}
\and
Nicol\`{o} Cesa-Bianchi%
\footnote{Dipartimento di Informatica, Universit\`{a} degli Studi di Milano, Milan, Italy, \email{nicolo.cesa- bianchi@unimi.it}. Parts of this work were done while the author was at Microsoft Research, Redmond.}
\and
Ofer Dekel%
\footnote{Microsoft Research, Redmond, Washington; \email{oferd@microsoft.com}.}
\and
Tomer Koren%
\footnote{Technion---Israel Institute of Technology, Haifa, Israel, and Microsoft Research, Herzliya, Israel, \email{tomerk@technion.ac.il}. Parts of this work were done while the author was at Microsoft Research, Redmond.}
}

\usepackage{amsmath,amsfonts,amssymb}
\usepackage{mathtools}
\usepackage{amsthm} 

\usepackage[capitalise]{cleveref}
\usepackage{xcolor}
\usepackage{dsfont}

\newcommand{\ignore}[1]{}

\newcommand{\email}[1]{\texttt{#1}}

%
%
%

\theoremstyle{plain}
\newtheorem{theorem}{Theorem}
\newtheorem{lemma}[theorem]{Lemma}

\newtheorem{claim}[theorem]{Claim}

\newtheorem*{theorem*}{Theorem}
\newtheorem*{lemma*}{Lemma}
\newtheorem*{corollary*}{Corollary}
\newtheorem*{proposition*}{Proposition}
\newtheorem*{claim*}{Claim}
\newtheorem*{fact*}{Fact}
\newtheorem*{observation*}{Observation}

\theoremstyle{definition}

\newtheorem*{definition*}{Definition}
\newtheorem*{remark*}{Remark}
\newtheorem*{example*}{Example}

 \theoremstyle{plain}
\newtheorem*{theoremaux}{\theoremauxref}
\gdef\theoremauxref{1}

\newenvironment{repthm}[2][]{%
  \def\theoremauxref{\cref{#2}}
  \begin{theoremaux}[#1]
}{%
  \end{theoremaux}
}

%




\DeclareMathAlphabet{\mathbfsf}{\encodingdefault}{\sfdefault}{bx}{n}


\let\Pr\relax
\DeclareMathOperator{\Pr}{\mathbb{P}}

\newcommand{\mycases}[4]{{
\left\{
\begin{array}{ll}
    {#1} & {\;\text{#2}} \\[1ex]
    {#3} & {\;\text{#4}}
\end{array}
\right. }}

\newcommand{\mythreecases}[6] {{
\left\{
\begin{array}{ll}
    {#1} & {\;\text{#2}} \\[1ex]
    {#3} & {\;\text{#4}} \\[1ex]
    {#5} & {\;\text{#6}}
\end{array}
\right. }}

\newcommand{\lr}[1]{\!\left(#1\right)\!}
\newcommand{\lrbig}[1]{\big(#1\big)}
\newcommand{\lrBig}[1]{\Big(#1\Big)}
\newcommand{\lrbra}[1]{\!\left[#1\right]\!}

\newcommand{\lrset}[1]{\left\{#1\right\}}

\newcommand{\set}[1]{\{#1\}}
\newcommand{\abs}[1]{|#1|}

\newcommand{\ceil}[1]{\lceil #1 \rceil}
\newcommand{\floor}[1]{\lfloor #1 \rfloor}

\newcommand{\wt}[1]{\smash{\widetilde{#1}}}
\newcommand{\wh}[1]{\smash{\widehat{#1}}}
\renewcommand{\O}{\mathcal{O}}

\newcommand{\E}{\mathbb{E}}
\newcommand{\EE}[1]{\E\lrbra{#1}}

\newcommand{\ind}[1]{\mathbb{I}\!\lrset{#1}}

\newcommand{\st}{\star}

\newcommand{\kl}[2]{\mathsf{D}_\mathrm{KL}(#1,#2)}
\newcommand{\tv}[2]{\mathsf{D}_{\mathrm{TV}}(#1,#2)}

\newcommand{\eps}{\epsilon}

\newcommand{\del}{\delta}

\newcommand{\half}{\frac{1}{2}}
\newcommand{\thalf}{\tfrac{1}{2}}

\newcommand{\eq}{~=~}
\renewcommand{\leq}{~\le~}
\renewcommand{\geq}{~\ge~}

\let\oldtfrac\tfrac
\renewcommand{\tfrac}[2]{\smash{\oldtfrac{#1}{#2}}}


\newcommand{\Q}{\mathcal{Q}}
\newcommand{\F}{\mathcal{F}}
\newcommand{\nin}{N^{\mathrm{in}}}
\newcommand{\nout}{N^{\mathrm{out}}}
\newcommand{\ellh}{\wh{\ell}}

\begin{document}

\maketitle


\begin{abstract} 
We study a general class of online learning problems where the
feedback is specified by a graph. This class 
includes online prediction with expert advice and the multi-armed bandit
problem, but also several learning problems where the online player
does not necessarily observe his own loss. We analyze how the structure of the
feedback graph controls the inherent difficulty of the induced
$T$-round learning problem. Specifically, we show that any feedback graph belongs 
to one of three classes: \emph{strongly observable} graphs,
\emph{weakly observable} graphs, and \emph{unobservable} graphs. We
prove that the first class induces learning problems with
$\wt\Theta(\alpha^{1/2} T^{1/2})$ minimax regret, where $\alpha$ is the
independence number of the underlying graph; the second class induces problems
with $\wt\Theta(\delta^{1/3}T^{2/3})$ minimax regret, where $\delta$
is the domination number of a certain portion of the graph; and the third class
induces problems with linear minimax regret. Our results subsume much of  
the previous work on learning with feedback graphs and reveal new connections to
partial monitoring games. We also show how the regret is affected 
if the graphs are allowed to vary with
time.
\end{abstract}


\clearpage
\section{Introduction}
\label{s:intro}
Online learning can be formulated as a repeated game between a
randomized player and an arbitrary, possibly adversarial, environment
\citep[see, e.g.,][]{cbl06,shalev2011online}. We focus on the
version of the game where, on each round, the player chooses one of
$K$ actions and incurs a corresponding loss. The loss associated with
each action on each round is a number between $0$ and $1$, assigned in
advance by the environment. The player's performance is measured using
the game-theoretic notion of regret, which is the difference between
his cumulative loss and the cumulative loss of the best fixed action
in hindsight. We say that the player is \emph{learning} if his regret
after $T$ rounds is $o(T)$.

After choosing an action, the player observes some feedback, which
enables him to learn and improve his choices on subsequent rounds.  A
variety of different feedback models are discussed in online
learning. The most common is \emph{full feedback}, where the player
gets to see the loss of all the actions at the end of each round. This
feedback model is often called \emph{prediction with expert advice}
\citep{cb+97,LittlestoneWa94,vo90}.  For example, imagine a
single-minded stock market investor who invests all of his wealth in
one of $K$ stocks on each day. At the end of the day, the investor
incurs the loss associated with the stock he chose, but he also
observes the loss of all the other stocks.

Another common feedback model is \emph{bandit feedback}
\citep{AuerCeFrSc02}, where the player only observes the loss of the
action that he chose. In this model, the player's choices influence
the feedback that he receives, so he has to balance an
exploration-exploitation trade-off. On one hand, the player wants to
exploit what he has learned from the previous rounds by choosing an
action that is expected to have a small loss; on the other hand, he
wants to explore by choosing an action that will give him the most
informative feedback.  The canonical example of online learning with
bandit feedback is online advertising.  Say that we operate an
Internet website and we present one of $K$ ads to each user that views
the site. Our goal is to maximize the number of clicked ads and
therefore we incur a unit loss whenever a user doesn't click on an
ad. We know whether or not the user clicked on the ad we presented,
but we don't know whether he would have clicked on any of the
other ads.

Full feedback and bandit feedback are special cases of a general
framework introduced by \citet{MS11}, where the feedback model is
specified by a \emph{feedback graph}. A feedback graph is a directed
graph whose nodes correspond to the player's $K$ actions. A directed
edge from action $i$ to action $j$ (when $i = j$ this edge is called a
\emph{self-loop}) indicates that whenever the player chooses action
$i$ he gets to observe the loss associated with action $j$. The full
feedback model is obtained by setting the feedback graph to be the
directed clique (including all self-loops, see
\cref{fig:graphs}a). The bandit feedback model is obtained by the
graph that only includes the self-loops (see
\cref{fig:graphs}b). Feedback graphs can describe many
other interesting online learning scenarios, as discussed below.

Our main goal is to understand how the structure of the feedback graph
controls the inherent difficulty of the induced online learning
problem. While regret measures the performance of a specific player or
algorithm, the inherent difficulty of the game itself is measured by
the \emph{minimax regret}, which is the regret incurred by an optimal
player that plays against the worst-case environment.  \citet{FS97}
proves that the minimax regret of the full feedback game is
$\Theta(\sqrt{T \ln{K}})$ while \citet{AuerCeFrSc02} proves that the
minimax regret of the bandit feedback game is $\wt\Theta(\sqrt{KT})$.
Both of these settings correspond to feedback graphs where all of the
vertices have self-loops ---we say that the player in these settings
is \emph{self-aware}: he observes his own loss value on each round.
The minimax regret rates induced by self-aware feedback graphs were
extensively studied in \citet{AlonCGMMS14}. In
this paper, we focus on the intriguing situation that occurs
when the feedback graph is missing some self-loops, namely, when the
player does not always observe his own loss. He is still accountable
for the loss on each round, but he does not always know how much loss
he incurred. As revealed by our analysis, the absence of self-loops
can have a significant impact on the minimax regret of the induced
game.

An example of a concrete setting where the player is not always
self-aware is the \emph{apple tasting} problem \citep{helmbold2000apple}. In this problem, the
player examines a sequence of apples, some of which may be rotten. For
each apple, he has two possible actions: he can either discard the
apple (action 1) or he can ship the apple to the market (action
2). The player incurs a unit loss whenever he discards a good apple
and whenever he sends a rotten apple to the market. However, the
feedback is asymmetric: whenever the player chooses to discard an
apple, he first tastes the apple and obtains full feedback; on the
other hand, whenever he chooses to send the apple to the market, he
doesn't taste it and receives no feedback at all. The feedback graph
that describes the apple tasting problem is shown in \cref{fig:graphs}d.
Another problem that is closely related to apple tasting is the
\emph{revealing action} or \emph{label efficient} problem \citep[Example~6.4]{cbl06}. In this problem, one action is a
special action, called the revealing action, which incurs a constant
unit loss. Whenever the player chooses the revealing action, he
receives full feedback. Whenever the player chooses any other action,
he observes no feedback at all (see \cref{fig:graphs}e).

Yet another interesting example where the player is not self-aware is
obtained by setting the feedback graph to be the \emph{loopless
  clique} (the directed clique minus the self-loops, see
\cref{fig:graphs}c). This problem is the complement to the bandit
problem: when the player chooses an action, he observes the loss of
all the other actions, but he does not observe his own loss. To
motivate this, imagine a police officer who wants to prevent crime.
On each day, the officer chooses to stand in one of $K$ possible
locations. Criminals then show up at some of these locations: if a
criminal sees the officer, he runs away before being noticed and the
crime is prevented; otherwise, he goes ahead with the crime.  The
officer gets a unit reward for each crime he prevents,\footnote{It is
  easier to describe this example in terms of maximizing rewards,
  rather than minimizing losses. In our formulation of the problem, a
  reward of $r$ is mathematically equivalent to a loss of $1-r$.} and
at the end of each day he receives a report of all the crimes that
occurred that day. By construction, the officer does not know if his
presence prevented a planned crime, or if no crime was planned for
that location. In other words, the officer observes everything but his
own reward.

Our main result is a full characterization of the minimax regret of
online learning problems defined by feedback graphs. Specifically, we
categorize the set of all feedback graphs into three distinct sets.
The first is the set of \emph{strongly observable} feedback graphs,
which induce online learning problems whose minimax regret is
$\wt\Theta(\alpha^{1/2} T^{1/2})$, where $\alpha$ is the independence
number of the feedback graph. This slow-growing minimax regret rate
implies that the problems in this category are easy to learn. The set
of strongly observable feedback graphs includes the set of self-aware
graphs, so this result extends the characterization given in
\citet{AlonCGMMS14}. The second category is the set of
\emph{weakly observable} feedback graphs, which induce learning
problems whose minimax regret is $\wt\Theta(\delta^{1/3} T^{2/3})$,
where $\delta$ is a new graph-dependent quantity called the weak
domination number of the feedback graph. The minimax regret of these
problems grows at a faster rate of $T^{2/3}$ with the number of
rounds, which implies that the induced problems are hard to learn. The
third category is the set of \emph{unobservable} graphs, which induce
unlearnable $\Theta(T)$ online problems.

Our characterization bears some surprising implications. For example,
the minimax regret for the loopless clique is the same, up to constant
factors, as the $\Theta(\sqrt{T\ln{K}})$ minimax regret for the full feedback
graph. However, if we start with the full feedback graph (the directed
clique with self-loops) and remove a self-loop and an incoming edge
from any node (see \cref{fig:graphs}f), we are left with a
weakly observable feedback graph, and the minimax regret jumps to
order $T^{2/3}$. Another interesting property of our characterization
is how the two learnable categories of feedback graphs depend on
completely different graph-theoretic quantities: the independence
number $\alpha$ and the weak domination number $\delta$.

The setting of online learning with feedback graphs is closely related 
to the more general setting of partial
monitoring \citep[see, e.g.,][Section~6.4]{cbl06}, where the player's
feedback is specified by a feedback matrix, rather than a feedback
graph. Partial monitoring games have also been categorized into three
classes: easy problems with $T^{1/2}$ regret, hard problems with
$T^{2/3}$ regret, and unlearnable problems with linear
regret~\citep[Theorem~2]{bartok2014partial}. If the loss values are
chosen from a finite set (say $\set{0,1}$), then bandit feedback, apple
tasting feedback, and the revealing action feedback models are all
known to be special cases of partial monitoring. In fact, in
\cref{s:partial} we show that any problem in our setting (with
binary losses) can be reduced to the partial
monitoring setting. Nevertheless, the characterization presented in
this paper has several clear advantages over the more general
characterization of partial monitoring games. First, our regret bounds
are minimax optimal not only with respect to $T$, but also with
respect to the other relevant problem parameters.
Second, we obtain our upper bounds with a simple and efficient algorithm. 
Third, our characterization is stated in terms of simple and intuitive
combinatorial properties of the problem.



The paper is organized as follows. In \cref{sec:setting} we define the
problem setting and state our main results. In \cref{s:algo} we
describe our player algorithm and prove upper bounds on the minimax
regret. In \cref{s:lower} we prove matching lower bounds on the
minimax regret.  Finally, in \cref{s:exte} we extend our analysis to
the case where the feedback graph is neither fixed nor known in
advance.

\begin{figure}[t]
\begin{center}
  \begin{tabular}{ccc}
    \begin{tikzpicture}[scale=1, transform shape]
      \draw[white] (-2,-1.7) -- (-2,2.3) -- (2,2.3) -- (2,-1.7) -- (-2,-1.7);
      \foreach \x [evaluate={\l=162-\x*72;}] in {1,...,5} 
               {
                 \node[draw,circle,thick,black,minimum size=0.7cm] (n\x) at ($(0,0)+(\l:1.5)$) {};
                 \node at (n\x) {\x};
               }

               \path[->,black,thick] (n1) edge[anchor=center,loop above] (n1);
               \path[->,black,thick] (n2) edge[anchor=center,loop right] (n2);
               \path[->,black,thick] (n3) edge[anchor=center,loop right] (n3);
               \path[->,black,thick] (n4) edge[anchor=center,loop left] (n4);
               \path[->,black,thick] (n5) edge[anchor=center,loop left] (n5);

               \foreach \x in {1,...,5} \foreach \y in {1,...,5}
                        {
                          \ifnum \x<\y 
                          \path[->,black,thick] (n\x) edge[bend left=10] (n\y);     
                          \fi 
                          \ifnum \x>\y 
                          \path[->,black,thick] (n\x) edge[bend left=10] (n\y);     
                          \fi 
                        }
    \end{tikzpicture}
    &
    \begin{tikzpicture}[scale=1,transform shape]
      \draw[white] (-2,-1.7) -- (-2,2.3) -- (2,2.3) -- (2,-1.7) -- (-2,-1.7);
      \foreach \x [evaluate={\l=162-\x*72;}] in {1,...,5} 
               {
                 \node[draw,circle,thick,black,minimum size=0.7cm] (n\x) at ($(0,0)+(\l:1.2)$) {};
                 \node at (n\x) {\x};
               }

               \path[->,black,thick] (n1) edge[anchor=center,loop above] (n1);
               \path[->,black,thick] (n2) edge[anchor=center,loop above] (n2);
               \path[->,black,thick] (n3) edge[anchor=center,loop above] (n3);
               \path[->,black,thick] (n4) edge[anchor=center,loop above] (n4);
               \path[->,black,thick] (n5) edge[anchor=center,loop above] (n5);
    \end{tikzpicture}
    &
    \begin{tikzpicture}[scale=1, transform shape]
      \draw[white] (-2,-1.7) -- (-2,2.3) -- (2,2.3) -- (2,-1.7) -- (-2,-1.7);
      \foreach \x [evaluate={\l=162-\x*72;}] in {1,...,5} 
               {
                 \node[draw,circle,thick,black,minimum size=0.7cm] (n\x) at ($(0,0)+(\l:1.5)$) {};
                 \node at (n\x) {\x};
               }

               \foreach \x in {1,...,5} \foreach \y in {1,...,5}
                        {
                          \ifnum \x<\y 
                          \path[->,black,thick] (n\x) edge[bend left=10] (n\y);     
                          \fi 
                          \ifnum \x>\y 
                          \path[->,black,thick] (n\x) edge[bend left=10] (n\y);     
                          \fi 
                        }
    \end{tikzpicture}
    \\[-0.1cm]
    (a)&(b)&(c)
    \\[0.2cm]
    \begin{tikzpicture}[scale=1,transform shape]
      \draw[white] (-2,-1.7) -- (-2,2.3) -- (2,2.3) -- (2,-1.7) -- (-2,-1.7);
      \node[draw,circle,thick,black,minimum size=0.7cm] (n1) at (-0.65,0.5) {};
      \node[draw,circle,thick,black,minimum size=0.7cm] (n2) at (0.65,0.5) {};

      \node[black] at (n1) {1};
      \node[black] at (n2) {2};

      \path[->,black,thick] (n1) edge[anchor=center,loop above] (n1);
      \path[->,black,thick] (n1) edge (n2);
    \end{tikzpicture}
    &
    \begin{tikzpicture}[scale=1, transform shape]
      \draw[white] (-2,-1.7) -- (-2,2.3) -- (2,2.3) -- (2,-1.7) -- (-2,-1.7);
      \foreach \x [evaluate={\l=162-\x*72;}] in {1,...,5} 
               {
                 \node[draw,circle,thick,black,minimum size=0.7cm] (n\x) at ($(0,0)+(\l:1.2)$) {};
                 \node at (n\x) {\x};
               }

               \path[->,black,thick] (n1) edge[anchor=center,loop above] (n1);
               \path[->,black,thick] (n1) edge (n2);     
               \path[->,black,thick] (n1) edge (n3);     
               \path[->,black,thick] (n1) edge (n4);     
               \path[->,black,thick] (n1) edge (n5);     
    \end{tikzpicture}
    &
    \begin{tikzpicture}[scale=1, transform shape]
      \draw[white] (-2,-1.7) -- (-2,2.3) -- (2,2.3) -- (2,-1.7) -- (-2,-1.7);
      \foreach \x [evaluate={\l=162-\x*72;}] in {1,...,5} 
               {
                 \node[draw,circle,thick,black,minimum size=0.7cm] (n\x) at ($(0,0)+(\l:1.5)$) {};
                 \node at (n\x) {\x};
               }

               \path[->,black,thick] (n2) edge[anchor=center,loop right] (n2);
               \path[->,black,thick] (n3) edge[anchor=center,loop right] (n3);
               \path[->,black,thick] (n4) edge[anchor=center,loop left] (n4);
               \path[->,black,thick] (n5) edge[anchor=center,loop left] (n5);

               \foreach \x in {2,...,4} \foreach \y in {1,...,5}
                        {
                          \ifnum \x<\y 
                          \path[->,black,thick] (n\x) edge[bend left=10] (n\y);     
                          \fi 
                          \ifnum \x>\y 
                          \path[->,black,thick] (n\x) edge[bend left=10] (n\y);     
                          \fi 
                        }
            \path[->,black,thick] (n1) edge[bend left=10] (n2);     
            \path[->,black,thick] (n1) edge[bend left=10] (n3);     
            \path[->,black,thick] (n1) edge[bend left=10] (n4);     
            \path[->,black,thick] (n1) edge (n5);     
            \path[->,black,thick] (n5) edge[bend left=10] (n2);     
            \path[->,black,thick] (n5) edge[bend left=10] (n3);     
            \path[->,black,thick] (n5) edge[bend left=10] (n4);     

    \end{tikzpicture}
    \\[-0.1cm]
    (d)&(e)&(f)
\end{tabular}
\end{center}
\caption{Examples of feedback graphs: (a) \emph{full feedback}, (b) \emph{bandit feedback}, (c) \emph{loopless clique}, (d) \emph{apple tasting}, (e) \emph{revealing action}, (f) a clique minus a self-loop and another edge. }
\label{fig:graphs}
\end{figure}
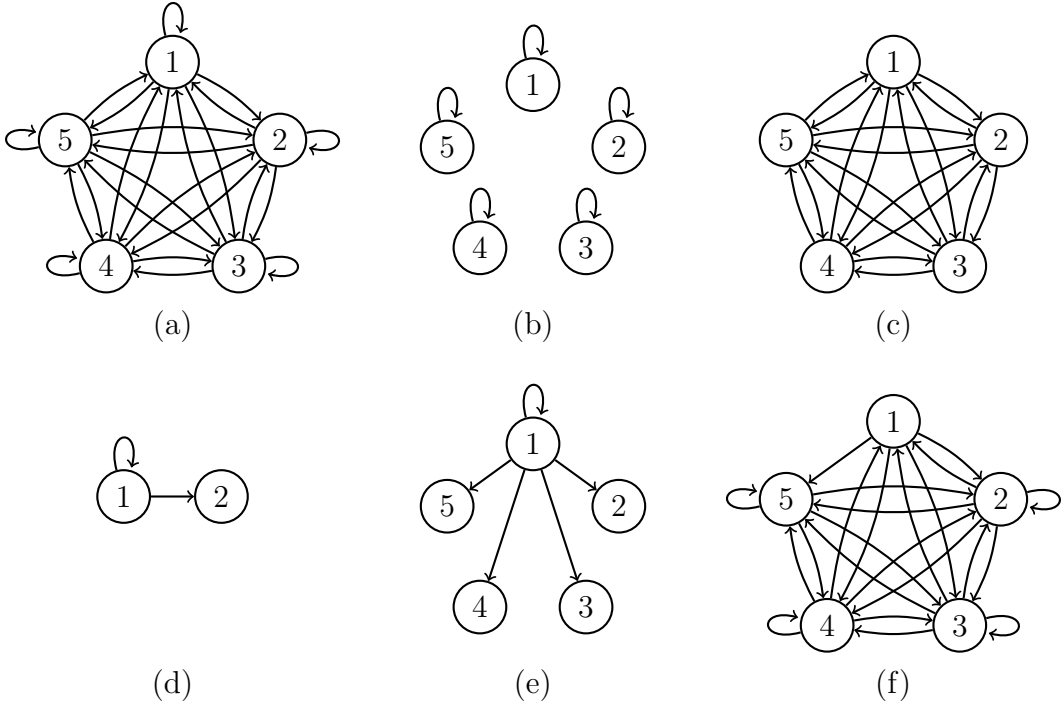

\section{Problem Setting and Main Results}
\label{sec:setting}
Let $G = (V,E)$ be a directed feedback graph over the set of actions
$V = \{1,\ldots,K\}$. For each $i\in V$, let $\nin(i) = \set{j\in V :
  (j,i) \in E}$ be the in-neighborhood of $i$ in $G$, and let
$\nout(i) = \set{j\in V : (i,j) \in E}$ be the out-neighborhood of $i$
in $G$.
If $i$ has a self-loop, that is $(i,i) \in E$, then $i \in \nin(i)$ and $i \in \nout(i)$.


Before the game begins, the environment privately selects a sequence
of loss functions $\ell_1,\ell_2.\dots$, where $\ell_t: V \mapsto
[0,1]$ for each $t \ge 1$. On each round $t=1,2,\dots$, the player
randomly chooses an action $I_t \in V$ and incurs the loss
$\ell_t(I_t)$. At the end of round $t$, the player receives the
feedback $\set{\big(j, \ell_{t}(j)\big) : j \in \nout(I_{t})}$.  In
words, the player observes the loss associated with each vertex in the
out-neighborhood of the chosen action $I_t$. In particular, if $I_t$
has no self-loop, then the player's loss $\ell_t(I_t)$ remains
unknown, and if the out-neighborhood of $I_t$ is empty, then the
player does not observe any feedback on that round.  The player's
\emph{expected regret} against a specific loss sequence
$\ell_1,\ldots,\ell_T$ is defined as $\E\big[ \sum_{t=1}^{T}
  \ell_{t}(I_{t}) \big] - \min_{i \in V} \sum_{t=1}^{T} \ell_{t}(i)$.  The
inherent difficulty of the $T$-round online learning problem induced
by the feedback graph $G$ is measured by the \emph{minimax regret},
denoted by $R(G,T)$ and defined as the minimum over all randomized
player strategies, of the maximum over all loss sequences, of the
player's expected regret.

\subsection{Main Results}
The main result of this paper is a complete characterization of the
minimax regret when the feedback graph $G$ is fixed and known to the
player.  Our characterization relies on various properties of $G$,
which we define below.
\begin{definition*}[Observability]
In a directed graph $G = (V,E)$ a vertex $i \in V$ is \emph{observable}
if $\nin(i) \neq \emptyset$. A vertex is \emph{strongly observable} if
either $\set{i} \subseteq \nin(i)$, or $V \setminus \set{i} \subseteq
\nin(i)$, or both.  A vertex is \emph{weakly observable} if it is
observable but not strongly.  A graph $G$ is observable if all its
vertices are observable and it is strongly observable if all its
vertices are strongly observable. A graph is weakly observable if it
is observable but not strongly.
\end{definition*}
In words, a vertex is observable if it has at least one incoming edge
(possibly a self-loop), and it is strongly observable if it has either
a self-loop or incoming edges from \emph{all} other vertices.  Note
that a graph with all of the self-loops is necessarily strongly
observable. However, a graph that is missing some of its self-loops
may or may not be observable or strongly observable.

\begin{definition*}[Weak Domination]
  In a directed graph $G = (V,E)$ with a set of weakly observable
  vertices $W \subseteq V$, a \emph{weakly dominating set} $D
  \subseteq V$ is a set of vertices that dominates $W$. Namely, for
  any $w \in W$ there exists $d \in D$ such that $w \in \nout(d)$. The
  \emph{weak domination number} of $G$, denoted by $\del(G)$, is the 
  size of the smallest weakly dominating set.
\end{definition*}
Our characterization also relies on a more standard graph-theoretic
quantity. An \emph{independent set} $S \subseteq V$ is a set of
vertices that are not connected by any edges. Namely, for any $u,v \in S$,  
$u \neq v$ it holds that $(u,v) \not \in E$.  
The \emph{independence number} $\alpha(G)$ of $G$ is the size of its 
largest independent set. Our characterization of the minimax regret 
rates is given by the following theorem.
\begin{theorem}
\label{th:main}
Let $G = (V,E)$ be a feedback graph with $\abs{V} \ge 2$, fixed and known in advance. 
Let $\alpha = \alpha(G)$ denote its independence number and let $\del = \del(G)$ denote its weak domination number. 
Then the minimax regret of the $T$-round online learning problem induced by $G$, where $T \ge |V|^3$, is
\begin{enumerate}[label=(\roman*),nosep]
\item $R(G,T) = \wt{\Theta}(\alpha^{1/2} \, T^{1/2})$ if $G$ is strongly observable;
\item $R(G,T) = \wt{\Theta}(\delta^{1/3} \, T^{2/3})$ if $G$ is weakly observable;
\item $R(G,T) = \Theta(T)$ if $G$ is not observable.
\end{enumerate}
\end{theorem}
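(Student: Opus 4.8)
The plan is to establish matching upper and lower bounds in each of the three regimes, obtaining all upper bounds from a single algorithmic template and the lower bounds from information-theoretic constructions tailored to the relevant graph parameter. I would dispose of case (iii) first, since it is the simplest. If $G$ is not observable, some vertex $i$ has $\nin(i) = \emptyset$, so no action ever reveals $\ell_t(i)$. The adversary draws a bit $b \in \set{0,1}$ uniformly and fixes the losses so that action $i$ is optimal when $b=0$ and clearly suboptimal when $b=1$, while every other vertex's loss is independent of $b$; since the player's observations are then independent of $b$, he cannot distinguish the two worlds and suffers $\Omega(T)$ expected regret. The matching $\O(T)$ upper bound is trivial because losses lie in $[0,1]$.

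For the upper bounds in cases (i) and (ii), the approach is to run an exponential-weights / online-mirror-descent update over $V$ on the importance-weighted estimates $\ellh_t(j) = \ell_t(j)\ind{j \in \nout(I_t)} / q_t(j)$, where $q_t(j) = \Pr(j \in \nout(I_t))$. These are unbiased whenever $q_t(j)>0$, and the standard mirror-descent decomposition bounds the regret by an entropy term plus the summed second moment, controlled by $\sum_j p_t(j)/q_t(j)$. For strongly observable graphs each vertex is observed with probability at least $p_t(j)$ (self-loop) or at least $1-p_t(j)$ (all incoming edges), and a graph-theoretic lemma (inherited from the self-aware analysis) bounds $\sum_j p_t(j)/q_t(j) = \wt\O(\alpha)$, which after tuning the learning rate yields $\wt\O(\alpha^{1/2}T^{1/2})$. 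For weakly observable graphs, a weakly observable vertex is observed only by playing a dominator, so I would add forced uniform exploration with probability $\gamma$ over a smallest weakly dominating set $D$ of size $\del$; then $q_t(j) \ge \gamma/\del$ for those vertices, the per-round second moment scales like $\del/\gamma$, and tuning $\gamma \asymp (\del/T)^{1/3}$ balances the $\gamma T$ exploration cost against the $\del T/\gamma$ variance cost to give $\wt\O(\del^{1/3}T^{2/3})$.

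For the matching lower bounds, in case (i) I would embed $\alpha$ disjoint two-point bandit instances on a maximum independent set: within an independent set there are no edges, so the feedback is purely bandit-like and the classical $\Omega(\alpha^{1/2}T^{1/2})$ multi-armed-bandit lower bound applies. In case (ii) the lower bound must reflect that information about weakly observable vertices can only be acquired by paying to play dominators, which is exactly the label-efficient / revealing-action situation forcing the $T^{2/3}$ rate: I would hide a gap of size $\eps$ on the weakly observable vertices and argue via a KL-divergence computation that resolving it requires $\Omega(\del/\eps^2)$ plays of dominators, each incurring constant loss; balancing this exploration cost against the $\eps T$ exploitation cost and optimizing $\eps \asymp (\del/T)^{1/3}$ yields $\Omega(\del^{1/3}T^{2/3})$.

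The main obstacle I anticipate is the strongly observable upper bound for vertices that have incoming edges from all others but no self-loop: such a vertex $j$ is observed only when it is \emph{not} played, so $q_t(j) = 1-p_t(j)$, and the term $p_t(j)/(1-p_t(j))$ blows up precisely when the algorithm wants to concentrate mass on $j$. Controlling this requires capping each action's probability away from $1$ (or mixing carefully) and then verifying that the truncation does not degrade the regret beyond logarithmic factors and that the graph-theoretic lemma still applies to the truncated distribution. A secondary obstacle is making the weakly observable lower bound depend on the correct parameter $\del$ rather than on a crude count of weakly observable vertices, which forces the hard instance to spread the gap across a genuinely domination-set-sized structure; here the hypothesis $T \ge \abs{V}^3$ is what guarantees the leading $T$-dependent terms dominate the lower-order graph terms, so that all three rates come out clean.
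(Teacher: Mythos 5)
Your architecture matches the paper's: the same exponential-weights template with importance weighting and dominating-set exploration for the upper bounds, the trivial argument for case (iii), and gap-hiding information-theoretic constructions for the lower bounds. However, the two steps you flag as "obstacles" are exactly where the paper's new technical content lives, and your proposed resolutions do not close them. For the strongly observable upper bound, consider a vertex $i$ with $V\setminus\set{i}\subseteq\nin(i)$ but no self-loop, so $P_t(i)=1-p_t(i)$. Your fix of capping $p_t(i)\le 1-c$ does not recover the $\sqrt{T}$ rate: the cap costs up to $cT$ in regret while the second-moment term contributes $\eta T/c$, and optimizing $c$ and $\eta$ against the entropy term yields $\Theta(T^{2/3})$, not $\Theta(\sqrt{\alpha T})$. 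The paper instead proves a refined second-order bound for Hedge (\cref{lem:regret-mw2}): running the standard analysis on the shifted losses $\ellh_t(i)-\sum_{j\in S_t}q_t(j)\ellh_t(j)$ (legal because $\ellh_t(i)\le 1/P_t(i)\le 1/\eta$ for such vertices, thanks to the uniform mixing) produces an extra factor $1-q_t(i)$ in their variance terms, which cancels $1/(1-p_t(i))$ up to a constant since $1-p_t(i)\ge(1-\gamma)\bigl(1-q_t(i)\bigr)$. Without this device (or an equivalent one) the bound degrades to $T^{2/3}$.

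For the weakly observable lower bound, what you call a "secondary obstacle" is the crux. If some single vertex dominates many of the arms carrying the $\eps$ gap, the player learns all of them with $O(1/\eps^2)$ exploration rounds and the claimed $\Omega(\del/\eps^2)$ exploration cost evaporates; a KL computation alone cannot produce the factor $\del$. The paper's missing ingredient is \cref{lem:noga}: any set $W$ with domination number $\del$ contains an independent set $U$ of size $\Omega(\del/\ln n)$ such that \emph{every} vertex of $G$ dominates at most $\ln n$ vertices of $U$. Its proof is a genuine probabilistic-method argument (iterative peeling to make every out-neighborhood a small fraction of $R$, random subsampling, binomial tail bounds, and Tur\'an's theorem), and only this dispersion property yields $\sum_{i\in U}N_i\le N\ln K$ and hence the $\Omega(\del/\eps^2)$ exploration cost. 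A smaller remark on case (i): "no edges within the independent set" does not by itself reduce the problem to a multi-armed bandit, since vertices outside the set may observe it cheaply; the paper handles this by invoking the lower bound of Alon et al.\ for self-aware graphs and observing that adding the missing self-loops only helps the player.
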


As mentioned above, this characterization has some interesting
consequences. Any strongly observable graph can be turned into a
weakly observable graph by removing at most two edges. Doing so will
cause the minimax regret rate to jump from order $\sqrt{T}$ to order
$T^{2/3}$. Even more remarkably, removing these edges will cause the
minimax regret to switch from depending on the independence number to
depending on the weak domination number. A striking example of this
abrupt change is the \emph{loopy star} graph, which is the union of
the directed star (\cref{fig:graphs}e) and all of the self-loops
(\cref{fig:graphs}b). In other words, this example is a multi-armed
bandit problem with a revealing action. The independence number of
this graph is $K-1$, while its weak domination number is~$1$. Since
the loopy star is strongly observable, it induces a game with minimax
regret $\wt{\Theta}(\sqrt{TK})$. However, removing a single loop from
the feedback graph turns it into a weakly observable graph, and its
minimax regret rate changes to $\wt{\Theta}(T^{2/3})$ (with no
polynomial dependence on $K$).



\section{The \textsc{Exp3.G} Algorithm}
\label{s:algo}
The upper bounds for weakly and strongly observable graphs in \cref{th:main} are both achieved by an algorithm we introduce, called \textsc{Exp3.G} (see \cref{alg:alg}), which is a variant of the \textsc{Exp3-SET} algorithm for undirected feedback graphs \citep{Alonetal2013}.
\setlength{\algomargin}{0.75em}
\begin{algorithm}[t] 
\SetAlgoNoEnd
\SetAlgoNoLine
\SetArgSty{textrm}
\SetKwFor{For}{For}{}{}
\SetKwInput{KwParams}{Parameters}
\KwParams{Feedback graph $G = (V,E)$, learning rate $\eta>0$, \newline
exploration set $U \subseteq V$, exploration rate $\gamma \in [0,1]$}
\BlankLine

Let $u$ be the uniform distribution over $U$\;
Initialize $q_1$ to the uniform distribution over $V$\;
\For{round $t=1,2,\dots$}
{
	Compute $p_{t} = (1-\gamma) q_{t} + \gamma u$\;
	Draw $I_{t} \sim p_{t}$, play $I_{t}$ and incur loss $\ell_t(I_t)$\;
	Observe $\set{(i,\ell_{t}(i)) : i \in \nout(I_{t})}$\;
	Update
	\begin{align} \label{eq:estimator}
		\forall ~ i \in V \qquad& \ellh_{t}(i) \eq \frac{\ell_{t}(i)}{P_{t}(i)}\,\ind{i \in \nout(I_{t})} , 
		\qquad\text{with}\qquad
		P_{t}(i) \eq \sum_{\mathclap{j \in \nin(i)}} p_{t}(j) \text{ \;}&\\ 
		\forall ~ i \in V \qquad& q_{t+1}(i) \eq \frac{ q_{t}(i) \exp(-\eta \ellh_{t}(i)) }
			{ \sum_{j \in V} q_{t}(j) \exp(-\eta \ellh_{t}(j)) }
			\text{ \;} \label{eq:update}&
	\end{align}
}
\vspace{-0.75em}
\caption{%
\textsc{Exp3.G}: online learning with a feedback graph
}
\label{alg:alg}
\end{algorithm}

Similarly to \textsc{Exp3} and \textsc{Exp3.SET}, our algorithm uses importance sampling to construct unbiased loss estimates with controlled variance. 
Indeed, notice that $P_{t}(i) = \Pr(i \in \nout(I_{t}))$ is simply the probability of observing the loss $\ell_{t}(i)$ upon playing $I_{t} \sim p_{t}$.
Hence, $\ellh_{t}(i)$ is an unbiased estimate of the true loss $\ell_{t}(i)$, and for all $t$ and $i \in V$ we have
\begin{align}
\label{eq:unbiased}
	\E_{t}[\ellh_{t}(i)]
	\eq \ell_{t}(i)
	\quad\text{and}\quad
	\E_{t}[\ellh_{t}(i)^{2}]
	\eq \frac{\ell_{t}(i)^{2}}{P_{t}(i)}
	~.
\end{align}
The purpose of the exploration distribution $u$ is to control the variance of the loss estimates by providing a lower bound on $P_t(i)$ for those $i \in V$ in the support of $u$; this ingredient will turn out to be essential to our analysis.

We now state the upper bounds on the regret achieved by \cref{alg:alg}. 
\begin{theorem}
\label{th:mainupper}
Let $G=(V,E)$ be a feedback graph with $K = \abs{V}$, independence number $\alpha = \alpha(G)$ and weakly dominating number $\del = \del(G)$. 
Let $D$ be a weakly dominating set such that $|D| = \del$.
The expected regret of \cref{alg:alg} on the online learning problem induced by $G$ satisfies the following:
\begin{enumerate}[label=(\roman*),nosep]
\item
if $G$ is strongly observable, then for $U=V$, 
$
    \gamma 
= 
	\min\bigl\{ \bigl(\tfrac{1}{\alpha T}\bigr)^{1/2} , \tfrac{1}{2} \bigr\}
$
and
$
\eta =  2\gamma ,
$
the expected regret against any loss sequence is $\O( \alpha^{1/2} T^{1/2} \ln(KT) )$;
\item
if $G$ is weakly observable and $T \ge K^3 \ln(K) / \del^2$, then for $U=D$,
$
    \gamma = \min\!\big\{ \big(\tfrac{\delta\ln K}{T}\big)^{1/3}, \tfrac{1}{2} \big\}
$
and
$
    \eta = \tfrac{\gamma^{2}}{\delta} ,
$
the~expected regret against any loss sequence is 
$
	\O\bigl( (\delta\ln K)^{1/3}T^{2/3} \bigr) .
$
\end{enumerate}
\end{theorem}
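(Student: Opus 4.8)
The plan is to instantiate the standard exponential-weights (mirror-descent) analysis for the importance-weighted estimates $\ellh_t$ and then reduce everything to controlling a single per-round ``variance'' quantity $M_t \eqdef \sum_{i\in V} q_t(i)/P_t(i)$. First I would run the usual potential argument on the unnormalized weights: since $e^{-x}\le 1-x+\tfrac12 x^2$ for $x\ge0$ and the estimates are nonnegative (so no upper constraint on $\eta$ is needed), for every comparator $k$ one gets $\sum_t \langle q_t,\ellh_t\rangle - \sum_t \ellh_t(k) \le \tfrac{\ln K}{\eta} + \tfrac{\eta}{2}\sum_t\sum_i q_t(i)\,\ellh_t(i)^2$. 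Next I would pass to expectations: the mixing $p_t=(1-\gamma)q_t+\gamma u$ gives $\langle p_t,\ell_t\rangle \le \langle q_t,\ell_t\rangle + \gamma$, while unbiasedness and the second-moment identity \cref{eq:unbiased} turn $\E[\ellh_t(k)]$ into $\ell_t(k)$ and $\E_t[\sum_i q_t(i)\ellh_t(i)^2]$ into $\sum_i q_t(i)\ell_t(i)^2/P_t(i)\le M_t$. Together these yield the master bound $R(G,T)\le \gamma T + \tfrac{\ln K}{\eta} + \tfrac{\eta}{2}\sum_t \E[M_t]$, and the two parts of the theorem follow by bounding $M_t$ and plugging in the stated $\gamma,\eta$.

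The crux is therefore the bound on $M_t$, which is where the graph structure and the choice of $U$ enter. For part~(i) the exploration $U=V$ forces $p_t(i)\ge\gamma/K$ for all $i$, so every vertex has nonnegligible observation probability. I would split $V$ following the definition of strong observability, into vertices carrying a self-loop and vertices receiving edges from all others. For self-loop vertices $P_t(i)\ge p_t(i)$, so I can invoke the independence-number lemma of \citet{AlonCGMMS14} (applied to the induced subgraph, whose independence number is at most $\alpha$) to bound their contribution by $\O(\alpha\ln(K/\gamma))$; the ``light'' no-self-loop vertices, those with $q_t(i)\le\tfrac12$, contribute only $\O(1)$ since then $P_t(i)=1-p_t(i)\ge\tfrac14$. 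Substituting $\gamma=(\alpha T)^{-1/2}$ and $\eta=2\gamma$ places each master-bound term at the scale $\O(\sqrt{\alpha T}\,\ln(KT))$.

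The main obstacle is the at most one ``heavy'' no-self-loop vertex $r_t$ with $q_t(r_t)>\tfrac12$. There $P_t(r_t)=1-p_t(r_t)$ can be as small as $\Theta(\gamma)$, so its per-round contribution to $M_t$ can reach $\Theta(1/\gamma)=\Theta(\sqrt{\alpha T})$; bounding it crudely per round would inflate $\tfrac{\eta}{2}\sum_t M_t$ to $\Theta(T)$ and destroy the rate. This is exactly the delicate regime realized by the loopless clique, where $\alpha=1$. The fix is an amortized argument: whenever this term is of order $1/\gamma$ we necessarily have $\ell_t(r_t)\approx1$ and $q_t(r_t)\approx1$, and then the large realized estimate $\ellh_t(r_t)=\Theta(1/\gamma)$ multiplies the potential by $e^{-\eta\ellh_t(r_t)}=e^{-\Theta(1)}$, forcing a multiplicative decay of $q_t(r_t)$. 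Charging each expensive round against this decay (equivalently, closing a self-bounding inequality relating the heavy vertex's cumulative variance to potential drops of the form $\ln(1/q_t(r_t))$) keeps the total heavy contribution at $\tO(\alpha T)$, which the $\sqrt{\alpha T}$ rate can absorb. I expect verifying this amortization, and handling that the identity of $r_t$ may change across rounds, to be the most technical part.

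For part~(ii) the exploration $U=D$ over a smallest weakly dominating set is what tames the hard vertices: for every weakly observable $w$, weak domination provides some $d\in D\cap\nin(w)$, so $P_t(w)\ge p_t(d)\ge\gamma/\del$, whence $\sum_{w\in W} q_t(w)/P_t(w)\le\del/\gamma$. The strongly observable vertices of $G$ are handled as in part~(i): the self-loop and light ones contribute $\O(K)$ by elementary bounds, and the at most one heavy no-self-loop vertex is either directly affordable (its $\O(1/\gamma)$ per-round cost is now dominated by the smaller learning rate) or handled by the same amortization. Substituting $\gamma=(\del\ln K/T)^{1/3}$ and $\eta=\gamma^2/\del$ makes the three master-bound terms all of order $(\del\ln K)^{1/3}T^{2/3}$, while the $\O(K)$ contribution of the strongly observable vertices becomes $\tfrac{\eta}{2}\,T\,\O(K)=\O\big(K(\ln K)^{2/3}T^{1/3}\del^{-1/3}\big)$, which is absorbed into the target rate precisely under the stated assumption $T\ge K^3\ln K/\del^2$. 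The remaining work is bookkeeping: tracking logarithmic factors and checking that the $\min\{\cdot,\tfrac12\}$ truncations of $\gamma$ only help.
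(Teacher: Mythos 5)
Your reduction to bounding $M_t=\sum_i q_t(i)/P_t(i)$, the treatment of the self-loop vertices via the independence-number lemma (\cref{lem:alon}), the light no-self-loop vertices, and essentially all of part~(ii) track the paper's proof. The genuine gap is exactly at the point you flag as the crux: the heavy no-self-loop vertex in part~(i). The amortization you propose cannot work, because the quantity you are trying to amortize down to $\tO(\alpha T)$ is in fact $\Theta(\sqrt{\alpha}\,T^{3/2})$ in the worst case, and no charging scheme can make a sum smaller than its value. Concretely, take the loopless $2$-clique ($\alpha=1$, $\gamma=T^{-1/2}$, $\eta=2\gamma$) and the losses $\ell_t=(0,1)$ for $t\le T/2$ and $\ell_t=(1,0)$ for $t>T/2$. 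By the end of the first phase $1-q_t(1)\approx e^{-\eta T/2}$, and since the expected per-round increment of $\ln\bigl(w_t(2)/w_t(1)\bigr)$ in the second phase is only $\eta$, vertex $1$ remains heavy for $\Theta(T)$ consecutive rounds; on each of them $P_t(1)=1-p_t(1)=\Theta(\gamma)$ and $\ell_t(1)=1$, so $q_t(1)\,\E_{t}[\ellh_t(1)^2]=\Theta(1/\gamma)$ and $\sum_t\E[M_t]=\Theta(T/\gamma)=\Theta(T^{3/2})$, giving $\tfrac{\eta}{2}\sum_t\E[M_t]=\Theta(T)$ even though the true regret is $O(\sqrt T)$. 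The specific mechanism you invoke also misfires: the event $\ellh_t(r_t)=\Theta(1/\gamma)$ occurs only with probability $P_t(r_t)=\Theta(\gamma)$, so in expectation $\ln(1/q_t(r_t))$ increases by only $O(\eta)$ per round --- there is no fast forced decay to charge against, and the total available potential drop over the game is $O(1)$ against $\Theta(T)$ expensive rounds.

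The paper's fix is not an amortization but a strengthening of the second-order bound itself (\cref{lem:regret-mw2}): running Hedge on the centered losses $\ellh_t(i)-\sum_{j\in S_t}q_t(j)\ellh_t(j)$ replaces the second moment by (essentially) the variance of $\ellh_t$ under $q_t$, which buys an extra factor $1-q_t(i)$ for every $i\in S_t$ with $\ellh_t(i)\le 1/\eta$. Taking $S_t$ to be the set of no-self-loop vertices (for which $P_t(i)=1-p_t(i)\ge\eta$ guarantees $\ellh_t(i)\le1/\eta$), the problematic term becomes $q_t(i)(1-q_t(i))/(1-p_t(i))\le 2q_t(i)$, using $1-p_t(i)\ge\tfrac12(1-q_t(i))$; heavy and light vertices are then handled uniformly, deterministically, and per round, with total contribution $2T$. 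If you replace your amortization step by this centering argument (equivalently, bound $\ln\sum_i q_t(i)e^{-\eta\ellh_t(i)}$ by the centered rather than raw second moment), the rest of your outline goes through.
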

In the previously studied self-aware case (i.e., strongly observable with self-loops), our result matches the bounds of \cite{AlonCGMMS14,kocak2014efficient}.
The tightness of our bounds in all cases is discussed in \cref{s:lower} below.

\subsection{A Tight Bound for the Loopless Clique}
One of the simplest examples of a feedback graph that is not
self-aware is the loopless clique (\cref{fig:graphs}c). This graph is
strongly observable with an independence number of $1$, so
\cref{th:mainupper} guarantees that the regret of
\cref{alg:alg} in the induced game is $\O(\sqrt{T} \ln(KT))$.
However, in this case we can do better than \cref{th:mainupper} and prove (see \cref{s:clique}) that the regret of the same algorithm is actually $\O(\sqrt{T\ln{K}})$, which is the same as the regret rate of the full feedback game (\cref{fig:graphs}a). In other words, if we start with full feedback and then hide the player's own loss, the regret rate remains the same (up to constants).
\begin{theorem}
\label{th:blind}
For any sequence of loss functions $\ell_{1},\ldots,\ell_{T}$, where
$\ell_t:V \mapsto [0, 1]$, the regret of \cref{alg:alg}, with the loopless clique feedback graph and with parameters $\eta = \sqrt{(\ln K)/(2T)}$ and $\gamma=2\eta$, is upper-bounded by $5\sqrt{T\ln K}$. 
\end{theorem}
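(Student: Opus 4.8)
The plan is to run the standard exponential-weights (Hedge) analysis of \cref{alg:alg} and then reduce the whole problem to controlling a single second-order term. From the multiplicative update \eqref{eq:update} one obtains the usual telescoping bound on the log-partition function, which gives, for every fixed comparator $k$,
\[
\sum_{t}\langle q_t,\ellh_t\rangle-\sum_t\ellh_t(k)\leq\frac{\ln K}{\eta}+\frac{\eta}{2}\sum_t\sum_{i}q_t(i)\,\ellh_t(i)^2 .
\]
Passing from the played distribution $p_t=(1-\gamma)q_t+\gamma u$ to $q_t$ costs an additive $\gamma T$, and taking expectations together with the unbiasedness identities \eqref{eq:unbiased} turns the left-hand side into the true expected regret and the right-hand side into $\frac{\ln K}{\eta}+\frac{\eta}{2}\,\E\sum_t\sum_i q_t(i)\frac{\ell_t(i)^2}{P_t(i)}$. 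For the loopless clique $\nin(i)=V\setminus\{i\}$, so $P_t(i)=1-p_t(i)$, and everything comes down to bounding $\sum_i q_t(i)\frac{\ell_t(i)^2}{1-p_t(i)}$ in expectation, with $U=V$ and $\gamma=2\eta$.

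The key structural observation is that at most one vertex $m$ can have $p_t(m)>\tfrac12$. For every other vertex $1-p_t(i)\geq\tfrac12$, so these \emph{light} vertices contribute at most $2\sum_i q_t(i)\ell_t(i)^2\leq 2$ to the per-round sum, hence at most $O(\eta T)=O(\sqrt{T\ln K})$ to the regret bound after the $\frac{\eta}{2}$ weighting. The crux is therefore the single \emph{heavy} vertex $m$, whose importance-weighting factor $1/(1-p_t(m))$ is only kept finite by the uniform exploration: with $U=V$ and $\gamma=2\eta$ one has $1-p_t(i)\geq\gamma\frac{K-1}{K}\geq\eta$ for all $i$ (using $K\ge 2$), which is exactly the role of the exploration distribution emphasized after \eqref{eq:unbiased}.

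The main obstacle is that this floor only yields $1-p_t(m)\geq\eta$, so a naive per-round estimate of the heavy term is $O(1/\eta)$, which summed over $T$ rounds would be linear. The resolution, which I expect to be the heart of the argument, is an amortized (self-bounding) analysis: the heavy term is large only when $q_t(m)$ is within $O(\eta)$ of a point mass on an action of large loss, and such extreme concentration on a costly action cannot persist, because then $\ellh_t(m)\approx\ell_t(m)/(1-p_t(m))$ is large whenever $m$ is not played and drives $q_{t+1}(m)$ down through \eqref{eq:update}. Making this precise---charging the cumulative heavy contribution against the decrease of $q_t(m)$ and against the algorithm's own incurred loss, and exploiting the exact balance $\gamma=2\eta$---should show that the heavy vertex also contributes only $O(\sqrt{T\ln K})$. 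Combining the three pieces and substituting $\eta=\sqrt{(\ln K)/(2T)}$ and $\gamma=2\eta$ makes each of $\frac{\ln K}{\eta}$, the second-order term, and $\gamma T$ of order $\sqrt{T\ln K}$, which should add up to the claimed $5\sqrt{T\ln K}$.
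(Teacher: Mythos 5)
There is a genuine gap at exactly the point you flag as ``the heart of the argument.'' Your reduction to bounding $\sum_i q_t(i)\,\ell_t(i)^2/(1-p_t(i))$ is correct, and the split into light vertices ($p_t(i)\le\tfrac12$) and the at most one heavy vertex $m$ correctly isolates the difficulty. But for the heavy vertex you only observe that the exploration floor gives $1-p_t(m)\ge\eta$, note that this yields a useless $O(1/\eta)$ per-round bound, and then assert that an amortized, self-bounding analysis ``should show'' the cumulative contribution is $O(\sqrt{T\ln K})$. That analysis is never carried out, and it is not routine: the losses are adversarial, $q_t(m)$ can sit just below $1$ for long stretches (the large estimate $\ellh_t(m)=\ell_t(m)/(1-p_t(m))$ is only realized with the small probability $1-p_t(m)$, so the expected multiplicative decrease of $q_t(m)$ per round is mild even while the conditional second moment $\ell_t(m)^2/(1-p_t(m))$ stays of order $1/\eta$), so there is no obvious monotone potential to charge against. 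As written, the proof does not establish the theorem.

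The paper's resolution is different and entirely static, requiring no amortization. It applies the \emph{refined} second-order bound of \cref{lem:regret-mw2} with $S_t=V$ (legitimate because $\ellh_t(i)\le 1/P_t(i)\le 1/\eta$), which replaces $q_t(i)\,\ellh_t(i)^2$ by $q_t(i)\bigl(1-q_t(i)\bigr)\ellh_t(i)^2$. Combined with the pointwise inequality $1-p_t(i)=(1-\gamma)(1-q_t(i))+\gamma\bigl(1-\tfrac1K\bigr)\ge\bigl(1-\tfrac{\gamma}{K}\bigr)(1-q_t(i))\ge\tfrac12(1-q_t(i))$, the extra factor $1-q_t(i)$ exactly cancels the dangerous $1/(1-p_t(i))$: every vertex, heavy or light, contributes at most $2q_t(i)$ per round, so the second-order term is at most $2\eta T$ deterministically. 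If you want to complete your proof, this is the missing ingredient; the standard second-order bound you start from is provably insufficient here, since it is precisely the $1-q_t(i)$ refinement that handles the vertex with $q_t(m)$ near $1$.
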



\subsection{Refined Second-order Bound for Hedge}
%
Our analysis of \textsc{Exp3.G} builds on a new second-order regret bound for the classic Hedge algorithm.%
\footnote{A second-order regret bound controls the regret with an expression that depends on a quantity akin to the second moment of the losses.} 
Recall that Hedge \citep{FS97} operates in the full feedback setting (see \cref{fig:graphs}a), where at time $t$ the player has access to losses $\ell_s(i)$ for all $s < t$ and $i \in V$. Hedge draws action $I_t$ from the distribution $p_t$ defined by
\begin{equation}
\label{eqn:mw}
	\forall ~ i \in V ~, \qquad 
    q_t(i) \eq \frac{\exp\big(-\eta \sum_{s=1}^{t-1} \ell_s(i)\big)}
    	{\sum_{j \in V}\exp\big(-\eta \sum_{s=1}^{t-1} \ell_s(j)\big)} 
    ~,
\end{equation}
where $\eta$ is a positive learning rate. 
The following novel regret bound is key to proving that our algorithm achieves tight bounds over the regret (to within logarithmic factors).
\begin{lemma}
\label{lem:regret-mw2}
Let $q_{1},\ldots,q_{T}$ be the probability vectors defined by \cref{eqn:mw} for a sequence of loss functions $\ell_{1},\ldots,\ell_{T}$ such that~$\ell_{t}(i) \ge 0$ for all $t=1,\dots,T$ and $i \in V$.
For each $t$, let $S_{t}$ be a subset of $V$ such that $\ell_{t}(i) \le 1/\eta$ for all $i \in S_{t}$.
Then, for any $i^\st \in V$ it holds that
\begin{align*}
	\sum_{t=1}^{T} \sum_{i \in V} q_{t}(i) \ell_{t}(i)
	- \sum_{t=1}^{T} \ell_{t}(i^\st) 
\leq
	\frac{\ln K}{\eta}
	+ \eta \sum_{t=1}^{T} \!\left( 
	\sum_{i \in S_{t}} q_{t}(i) \bigl(1-q_{t}(i)\bigr) \ell_{t}(i)^{2} 
	+ \sum_{i \notin S_{t}} q_{t}(i) \ell_{t}(i)^{2} 
	\!\right)
	\!.
\end{align*}
\end{lemma}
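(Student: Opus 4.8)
The plan is to run the standard exponential-weights potential argument, reduce the claim to a single sharp per-round inequality, and then prove that inequality by a truncation-plus-centering device. First I would introduce the potential $W_t \eqdef \sum_{i \in V} \exp\big(-\eta\sum_{s=1}^{t-1}\ell_s(i)\big)$, so that $q_t(i) = \exp(-\eta\sum_{s<t}\ell_s(i))/W_t$ and $W_{t+1}/W_t = \sum_{i\in V} q_t(i)\,e^{-\eta\ell_t(i)}$. Telescoping $\ln(W_{T+1}/W_1)$, bounding $W_{T+1}\ge \exp(-\eta\sum_t\ell_t(i^\st))$ from below and using $W_1 = K$, the lemma reduces (after dividing the accumulated inequality by $\eta$) to the per-round bound
\begin{equation*}
\ln\sum_{i\in V} q_t(i)\,e^{-\eta\ell_t(i)} \;\le\; -\eta\sum_{i\in V} q_t(i)\ell_t(i) + \eta^2\Big(\sum_{i\in S_t} q_t(i)(1-q_t(i))\ell_t(i)^2 + \sum_{i\notin S_t} q_t(i)\ell_t(i)^2\Big).
\end{equation*}

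For this per-round bound I would fix $t$, drop the subscript, and introduce the \emph{truncated} losses $\ell'(i)\eqdef\min\{\ell(i),1/\eta\}$, which agree with $\ell$ on $S$ and satisfy $\eta\ell'(i)\le 1$ for every $i$. Since $\ell'\le \ell$ we have $e^{-\eta\ell(i)}\le e^{-\eta\ell'(i)}$, so it suffices to control $\ln\sum_i q(i)e^{-\eta\ell'(i)}$. Because all truncated losses lie in $[0,1/\eta]$, centering at $\bar\ell'\eqdef\sum_i q(i)\ell'(i)\le 1/\eta$ keeps the exponents in range: writing the sum as $e^{-\eta\bar\ell'}\sum_i q(i)e^{\,\eta(\bar\ell'-\ell'(i))}$, each exponent $\eta(\bar\ell'-\ell'(i))\le\eta\bar\ell'\le 1$, so the elementary inequality $e^y\le 1+y+y^2$ (valid for all $y\le 1$) together with $\ln(1+x)\le x$ yields the clean second-order bound $\ln\sum_i q(i)e^{-\eta\ell'(i)} \le -\eta\bar\ell' + \eta^2\,\var_q(\ell')$, the first-order term vanishing by the choice of centering. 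I would then bound $\var_q(\ell')\le\sum_i q(i)(1-q(i))\ell'(i)^2$, the cross terms being nonnegative since $\ell'\ge 0$.

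The final step is the bookkeeping that converts this capped estimate into the target, and this is where the main subtlety lies. On $S$ the truncation is vacuous, so those terms already equal $\sum_{i\in S}q(i)(1-q(i))\ell(i)^2$. Writing $\bar\ell=\sum_i q(i)\ell(i)$, the terms $i\notin S$ contribute, beyond $-\eta\bar\ell$, the capping-induced mean shift $-\eta\bar\ell'+\eta\bar\ell=\sum_{i\notin S}q(i)(\eta\ell(i)-1)$ together with the capped variance $\eta^2\sum_{i\notin S}q(i)(1-q(i))\ell'(i)^2$. The crux is that on $i\notin S$ one has $\eta\ell'(i)=1$, so the capped variance is at most $\sum_{i\notin S}q(i)$, which exactly absorbs the $-1$'s in the mean-shift term; what remains is $\eta\sum_{i\notin S}q(i)\ell(i)$, and since $\eta\ell(i)\ge 1$ there, this is at most $\eta^2\sum_{i\notin S}q(i)\ell(i)^2$, matching the claim. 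The main obstacle is engineering precisely this cancellation: a naive Taylor expansion of the logarithm is not enough, because the inequality is tight in the regime where a single weight $q(i)$ tends to $1$ with its loss approaching $1/\eta$ (there $\ln\sum_i q(i)e^{-\eta\ell(i)}$ equals the right-hand side), and it is exactly the truncation-plus-centering that stays sharp in that regime while still taming the unbounded losses off $S$.
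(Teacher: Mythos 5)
Your proof is correct, and it takes a genuinely different route from the paper's. The paper first establishes the standard second-order Hedge bound for losses bounded below by $-1/\eta$ (\cref{lem:regret-mw1}) and then derives the refined statement by a shift-invariance argument: running Hedge on the centered losses $\ell_t(i)-\bar\ell_t$ with $\bar\ell_t=\sum_{i\in S_t}q_t(i)\ell_t(i)\le 1/\eta$ produces the same distributions $q_t$, and the resulting variance term $\sum_i q_t(i)\bigl(\ell_t(i)-\bar\ell_t\bigr)^2$ is then split over $S_t$ and its complement. You instead work directly with the per-round potential ratio, and your key device --- truncating at $1/\eta$ so that centering at the full truncated mean keeps every exponent below $1$ --- replaces the paper's restricted-mean shift. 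Both arguments ultimately rest on the same two facts: centering annihilates the first-order term and turns $\sum_i q(i)\ell(i)^2$ into a variance, and the nonnegativity of the cross terms in $\bigl(\sum_i q(i)\ell(i)\bigr)^2$ yields the $\bigl(1-q_t(i)\bigr)$ factors. Your version is more self-contained (no appeal to an auxiliary lemma or to the shift-invariance of the update), at the price of the delicate bookkeeping in your final paragraph. One imprecision there: the hypothesis only guarantees $\ell_t(i)\le 1/\eta$ \emph{on} $S_t$; it does not force $\ell_t(i)>1/\eta$ \emph{off} $S_t$. Hence your identity $-\eta\bar\ell'+\eta\bar\ell=\sum_{i\notin S}q(i)\bigl(\eta\ell(i)-1\bigr)$ and the claim ``$\eta\ell'(i)=1$ for $i\notin S$'' are valid only for indices where the cap is actually active. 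For an index $i\notin S$ with $\ell(i)\le 1/\eta$ the truncation is vacuous, its contribution to the mean shift is zero, and $q(i)\bigl(1-q(i)\bigr)\ell(i)^2\le q(i)\ell(i)^2$ settles that term directly, so the bound survives --- but the case split should be stated explicitly. (The paper's own proof makes the same implicit assumption when it asserts $\ell_t(i)>1/\eta$ for all $i\notin S_t$, so this is a shared blemish rather than a defect of your approach.)
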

See \cref{sec:proofs} for a proof of this result. 
The standard second-order regret bound of Hedge \citep[see, e.g.,][]{CBMS07} is obtained by setting $S_{t} = \emptyset$ for all $t$. Therefore, our bound features a slightly improved dependence (i.e., the $1-q_{t}(i)$ factors) on actions whose losses do not exceed $1/\eta$.
Indeed, in the analysis of \textsc{Exp3.G}, we apply the above lemma to the loss estimates $\ellh_t(i)$, and include in the sets $S_t$ all strongly observable vertices $i$ that do not have a self-loop. This allows us to gain a finer control on the variances $\ell_t(i)^2\big/P_t(i)$ of such vertices.

\subsection{Proof of \cref{th:mainupper}}

We now turn to prove \cref{th:mainupper}.
For the proof, we need the following graph-theoretic result, which is a variant of \citet[Lemma 16]{AlonCGMMS14}; for completeness, we include a proof in \cref{sec:proofs}.
\begin{lemma} \label{lem:alon}
Let $G = (V,E)$ be a directed graph with $\abs{V}=K$, in which each node $i \in V$ is assigned a positive weight $w_{i}$.
Assume that $\sum_{i \in V} w_{i} \le 1$, and that $w_{i} \ge \eps$ for all $i \in V$ for some constant $0<\eps<\half$.
Then
\begin{align*}
	\sum_{i \in V} \frac{w_{i}}{w_{i}+\sum_{j \in \nin(i)} w_{j}}
	\leq 4 \alpha \ln \frac{4K}{\alpha\eps}
\end{align*}
where $\alpha = \alpha(G)$ is the independence number of $G$.
\end{lemma}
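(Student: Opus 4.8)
The plan is to prove the bound via an integral representation of each summand combined with a probabilistic argument that trades the directed structure for the independence number. First I would reduce to the self-loop-free case: a self-loop at $i$ only enlarges the denominator $w_i + \sum_{j\in\nin(i)} w_j$, so deleting all self-loops can only increase the left-hand side while leaving $\alpha(G)$ unchanged. Hence I may assume $i \notin \nin(i)$ for every $i$, which is exactly what makes the independence argument below factorize cleanly.

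Writing $d_i = w_i + \sum_{j\in\nin(i)} w_j$ and using $\frac{1}{d_i} = \int_0^\infty e^{-s d_i}\,ds$, the target sum becomes $\int_0^\infty g(s)\,ds$, where $g(s) = \sum_{i\in V} w_i e^{-s d_i} = \sum_{i} w_i e^{-s w_i}\prod_{j\in\nin(i)} e^{-s w_j}$. I would record two easy bounds on $g$: the trivial $g(s)\le\sum_i w_i\le 1$, and a tail bound $g(s)\le e^{-\eps s}$ coming from $d_i\ge w_i\ge\eps$ together with $\sum_i w_i\le 1$.

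The heart of the argument is the bound $g(s)\le\alpha/s$, and this is the step I expect to be the main obstacle. Here I would first apply the convexity inequality $1+a\le e^a$ in the form $s w_i e^{-s w_i}\le 1-e^{-s w_i}$ to replace the weight $w_i e^{-s w_i}$ by $\tfrac{1}{s}(1-e^{-s w_i})$, giving $g(s)\le\frac1s\sum_i (1-e^{-s w_i})\prod_{j\in\nin(i)} e^{-s w_j}$. The point of this particular substitution is that the resulting summand is precisely a probability: if $R$ is the random vertex set that includes each $j$ independently with probability $1-e^{-s w_j}$, then $(1-e^{-s w_i})\prod_{j\in\nin(i)} e^{-s w_j}$ is the probability that $i\in R$ while no in-neighbour of $i$ lies in $R$, i.e.\ that $i$ is a source of the induced subgraph $G[R]$. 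The key combinatorial observation is that the set of sources of any induced subgraph is independent: if $u,v\in R$ are both sources and $(u,v)\in E$, then $v$ would have the in-neighbour $u\in R$, a contradiction. Therefore $\#\{\text{sources of }G[R]\}\le\alpha(G[R])\le\alpha$ for every realization of $R$, and taking expectations yields $\sum_i (1-e^{-s w_i})\prod_{j\in\nin(i)} e^{-s w_j}\le\alpha$, hence $g(s)\le\alpha/s$. (As a sanity check, in the unweighted case $w_i\equiv 1/K$ this recovers the clean Caro--Wei-type bound $\sum_i 1/(1+|\nin(i)|)\le\alpha(1+\ln(K/\alpha))$.)

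Finally I would assemble the estimate by integrating $g(s)\le\min\{1,\alpha/s\}$ up to a cutoff $S\asymp\frac1\eps\ln\frac1\eps$ and the tail bound $g(s)\le e^{-\eps s}$ beyond it. The first part contributes at most $\alpha\bigl(1+\ln(S/\alpha)\bigr)$ (and is simply $S$ if $S<\alpha$), while the tail contributes $\tfrac1\eps e^{-\eps S}=O(1)$ by the choice of $S$; using $\alpha\le K\le 1/\eps$ (the last inequality from $K\eps\le\sum_i w_i\le 1$) these combine to at most $4\alpha\ln\frac{4K}{\alpha\eps}$ once the generous constants are absorbed. The reduction and this final integration are routine; the whole difficulty is concentrated in producing the factor $\alpha$ in $g(s)\le\alpha/s$, which hinges on matching the exponential weight $w_i e^{-s w_i}$ to the inclusion probability $1-e^{-s w_i}$ and on the observation that sources of a random induced subgraph form an independent set.
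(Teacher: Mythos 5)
Your proof is correct, but it proceeds by a genuinely different route than the paper's. The paper discretizes the weights as $m_i/M$ with common denominator $M=\ceil{2K/\eps}$ (the hypothesis $w_i\ge\eps$ guarantees $m_i\ge 2K$, so the rounding only costs a factor of $2$), blows each vertex $i$ up into a clique of $m_i$ vertices, and then invokes as a black box the unweighted bound $\sum_v 1/(1+d_v^-)\le 2\alpha\ln(1+n/\alpha)$ from \citet{Alonetal2013}, using the fact that the blow-up preserves the independence number. You instead work with the real weights directly via the Laplace representation $1/d_i=\int_0^\infty e^{-sd_i}\,ds$, and supply the combinatorial core yourself: after the (valid) reduction to the self-loop-free case, the inequality $sw_ie^{-sw_i}\le 1-e^{-sw_i}$ turns each summand of $g(s)$ into the probability that $i$ is a source of the random induced subgraph $G[R]$ with inclusion probabilities $1-e^{-sw_j}$, and since sources form an independent set this yields the pointwise bound $g(s)\le\alpha/s$; the hypothesis $w_i\ge\eps$ enters only through the tail bound $g(s)\le e^{-\eps s}$ and the cutoff $S\asymp\eps^{-1}\ln(1/\eps)$. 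I checked the final accounting: with $\alpha\le K\le 1/\eps$ one gets $\alpha+2\alpha\ln\frac{4K}{\alpha\eps}+1\le 4\alpha\ln\frac{4K}{\alpha\eps}$, so the stated constant is indeed attainable. The trade-off is that your argument is self-contained and avoids both the discretization and the external lemma (it is in effect a weighted, directed Caro--Wei bound), whereas the paper's reduction is shorter on the page precisely because it outsources the combinatorics to the earlier work.
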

\begin{proof}[Proof of \cref{th:mainupper}]
Without loss of generality, we may assume that $K \ge 2$.
The proof proceeds by applying \cref{lem:regret-mw2} and upper bounding the second-order terms it introduces.
Indeed, since the distributions $q_1,q_2,\dots$ generated by \cref{alg:alg} via \cref{eq:update} are of the form given by \cref{eqn:mw}, with the losses $\ell_t$ replaced by the nonnegative loss estimates $\ellh_t$, we may apply \cref{lem:regret-mw2} to these distributions and loss estimates. 
The way we apply the lemma differs between the strongly observable and weakly observable cases, and we treat each separately.

First, assume that $G$ is strongly observable, implying that the exploration distribution $u$ is uniform on $V$.
Notice that for any $i \in V$ without a self-loop, namely with $i \notin \nin(i)$, we have $j \in \nin(i)$ for all $j \neq i$, and so $P_{t}(i) = 1-p_{t}(i)$. On the other hand, by the definition of $p_{t}$ and since $\eta = 2\gamma$ and $K \ge 2$, we have
$
	p_{t}(i) 
	= (1-\gamma) q_{t}(i) + \frac{\gamma}{K} 
	\le 1-\gamma + \tfrac{\gamma}{2}
	= 1-\eta
$,
so that $P_{t}(i) \ge \eta$. Thus, we can apply \cref{lem:regret-mw2} with $S_{t} = S = \set{i \,:\, i \notin \nin(i)}$ to the vectors $\wh{\ell}_{1},\ldots,\wh{\ell}_{T}$ and take expectations, and obtain that
\begin{align*}
    &\EE{ \sum_{t=1}^{T} \sum_{i \in V} q_{t}(i) 
    	\E_{t}[\wh{\ell}_{t}(i)]
    - \sum_{t=1}^{T} \E_{t}[\wh{\ell}_{t}(i^{\st})] }
\leq
    \frac{\ln K}{\eta}
\\
	&\qquad\qquad
    + \eta \, \sum_{t=1}^{T} \EE{ 
	\sum_{i \in S} q_{t}(i) (1-q_{t}(i)) \E_{t}[\wh{\ell}_{t}(i)^{2}]
	+ \sum_{i \notin S} q_{t}(i) \E_{t}[\wh{\ell}_{t}(i)^{2}] }
\end{align*}
for any fixed $i^{\st} \in V$.
Recalling \cref{eq:unbiased} and $P_t(i) = 1 - p_t(i)$ for all $i \in S$, we get
\begin{align*}
	\EE{ \sum_{t=1}^{T} \sum_{i \in V} q_{t}(i) \ell_{t}(i) }
	- \sum_{t=1}^{T} \ell_{t}(i^{\st}) 
\leq 
	\frac{\ln K}{\eta}
	+ \eta \, \sum_{t=1}^{T} \EE{
		\sum_{i \in S} q_{t}(i) \frac{1-q_{t}(i)}{1-p_{t}(i)} 
	+ \sum_{i \notin S} \frac{q_{t}(i)}{P_{t}(i)} }
~.
\end{align*}
The sum over $i \in S$ on the right-hand side is bounded as follows:
\begin{align*}
	\sum_{t=1}^{T} \sum_{i \in S} q_{t}(i) \frac{1-q_{t}(i)}{1-p_{t}(i)}
	\leq 2 \sum_{t=1}^{T} \sum_{i \in S} q_{t}(i)
	\leq 2T
	~.
\end{align*}
For the second sum, recall that any $i \notin S$ has a self-loop in the feedback graph, and also that $p_{t}(i) \ge \frac{\gamma}{K}$ as a result of mixing in the uniform distribution over $V$. 
Hence, we can use $p_{t}(i) \ge (1-\gamma) q_{t}(i) \ge \frac{1}{2} q_{t}(i)$ and apply \cref{lem:alon} with $\eps = \frac{\gamma}{K}$ that yields
\begin{align*}
	\sum_{i \notin S} \frac{q_{t}(i)}{P_{t}(i)}	
	\leq 2 \sum_{i \notin S} \frac{p_{t}(i)}{P_{t}(i)}
	\leq 8\alpha \ln \frac{K^{2}}{4\gamma}
	~.
\end{align*}
Putting everything together, and using the fact that $p_{t}(i) \le q_{t}(i) + \gamma u(i)$ to obtain
\begin{align} \label{eq:main2b}
	\sum_{i \in V} p_{t}(i) \ell_{t}(i)
	\leq \sum_{i \in V} q_{t}(i) \ell_{t}(i) + \gamma
	~,
\end{align}
results with the regret bound
\begin{align*}
	\EE{ \sum_{t=1}^{T} \sum_{i \in V} p_{t}(i) \ell_{t}(i) }
   	- \sum_{t=1}^{T} \ell_{t}(i^{\st}) 
\leq 
	\gamma T
	+ \frac{\ln K}{\eta} 
	+ 2\eta T \lr{ 1 + 4\alpha \ln\frac{K^{2}}{4\gamma} } 
	~.
\end{align*}
Substituting the chosen values of $\eta$ and $\gamma$ gives the first claim of the theorem.

Next, assume that $G$ is only weakly observable. Let $D \subseteq V$ be a weakly dominating set supporting the exploration distribution $u$, with $\abs{D} = \del$. 
Similarly to the strongly observable case, we apply \cref{lem:regret-mw2} to the vectors $\ellh_{1},\ldots,\ellh_{T}$, but in this case we set $S_t = \emptyset$ for all $t$.
%
%
Using \cref{eq:unbiased,eq:main2b} and proceeding exactly as the strongly observable case, we obtain
\begin{align*}
    \EE{ \sum_{t=1}^{T} \sum_{i \in V} p_{t}(i) \ell_{t}(i) } 
    - \sum_{t=1}^{T} \ell_{t}(i^{\st})
\leq
    \gamma T + \frac{\ln K}{\eta}
	+ \eta \, \sum_{t=1}^{T} \EE{ \sum_{i \in V} \frac{q_{t}(i)}{P_{t}(i)} }
\end{align*}
for any fixed $i^{\st} \in V$. 
In order to bound the expectation in the right-hand side, consider again the set $S = \set{i \,:\, i \notin \nin(i)}$ of vertices without a self-loop, and observe that  $P_{t}(i) = \smash{\sum_{j \in \nin(i)}} p_{t}(j) \ge \frac{\gamma}{\delta}$ for all $i \in S$. Indeed, if $i$ is weakly observable then there exists some $k \in D$ such that $k \in \nin(i)$ and $p_t(k) \ge \frac{\gamma}{\delta}$ because the exploration distribution $u$ is uniform over $D$; if $i$ is strongly observable then the same holds since $i$ does not have a self-loop and thus must be dominated by all other vertices in the graph.
Hence,
\begin{align*}
    \sum_{i \in V} \frac{q_{t}(i)}{P_{t}(i)}
\eq
    \sum_{i \in S} \frac{q_{t}(i)}{P_{t}(i)}
    + \sum_{i \notin S} \frac{q_{t}(i)}{P_{t}(i)}
\leq
    \frac{\delta}{\gamma} + 2K
~,
\end{align*}
where we used $P_t(i) \ge p_t(i) \ge (1-\gamma)q_t(i) \ge \frac{1}{2}q_t(i)$ to bound the sum over the vertices having a self-loop. 
Therefore, we may write
\begin{align*}
    \EE{ \sum_{t=1}^{T} \sum_{i \in V} p_{t}(i) \ell_{t}(i) } 
    - \sum_{t=1}^{T} \ell_{t}(i^{\st})
\leq
    \gamma T + \frac{\ln K}{\eta}
    + \frac{\eta\delta}{\gamma}T + 2\eta K T~.
\end{align*}
Substituting our choices of $\eta$ and $\gamma$, we obtain the second claim of the theorem.
\end{proof}


\section{Lower Bounds}
\label{s:lower}

In this section we prove lower bounds on the minimax regret for non-observable and weakly observable graphs. Together with \cref{{th:mainupper}} and the known lower bound of $\Omega\bigl(\sqrt{\alpha(G)T}\bigr)$ for strongly observable graphs \citep[Theorem~5]{AlonCGMMS14},%
\footnote{While \cite{AlonCGMMS14} only consider the special case of graphs that have self-loops at all vertices, their lower bound applies to any strongly observable graph: we can simply add any missing self-loops to the graph, without changing its independence number $\alpha$. The resulting learning problem, whose minimax regret is $\Omega\bigl(\sqrt{\alpha T}\bigr)$, is only easier for the player who may ignore the additional feedback.}
these results complete the proof of \cref{th:main}.
We remark that their lower bound applies when $T \ge \alpha(G)^3$, which includes our regime of interest.
We begin with a simple lower bound for non-observable feedback graphs.
\begin{theorem}
\label{th:notobser}
If $G=(V,E)$ is not observable and $\abs{V} \ge 2$, then for any player algorithm there exists a sequence of loss functions $\ell_{1},\ell_2,\ldots : V \mapsto [0,1]$ such that the player's expected regret is at least~$\tfrac{1}{4} T$.
\end{theorem}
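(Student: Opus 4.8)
The plan is to exploit the fact that a non-observable graph contains a vertex whose loss the player can never see, and then to run two indistinguishable loss sequences against the player, forcing large regret in at least one of them. First, since $G$ is not observable, fix a vertex $i^\st \in V$ with $\nin(i^\st) = \emptyset$. I claim $i^\st$ is never revealed: on any round $t$, playing $I_t$ reveals $\ell_t(i^\st)$ only if $i^\st \in \nout(I_t)$, i.e.\ $(I_t,i^\st) \in E$, i.e.\ $I_t \in \nin(i^\st)$; but $\nin(i^\st)=\emptyset$, so this never occurs. Since $\abs{V}\ge 2$, there is at least one other action available.

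Next I would define two oblivious loss sequences that agree on every action except $i^\st$. In world A, set $\ell_t(i^\st)=0$ and $\ell_t(i)=\tfrac12$ for all $i \neq i^\st$ and all $t$; in world B, set $\ell_t(i^\st)=1$ and keep $\ell_t(i)=\tfrac12$ for $i \neq i^\st$. The two worlds differ only in the coordinate $i^\st$, which the player never observes. To make this precise I would fix the player's internal randomness $\omega$ and argue by induction on $t$ that the action sequence is identical in the two worlds: given identical actions $I_1,\dots,I_{t-1}$, the feedback on those rounds consists of pairs $(j,\ell_s(j))$ with $j \in \nout(I_s)$, and since $i^\st \notin \nout(I_s)$ always, every revealed loss has $j \neq i^\st$ and therefore the same value $\tfrac12$ in both worlds; hence the feedback histories coincide and $I_t$ coincides. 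Consequently, writing $N$ for the number of rounds on which the player chooses $i^\st$, the law of $N$ is identical in worlds A and B, and in particular $n \eqdef \E[N]$ is the same in both.

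Then I would compute the two regrets. In world A the best fixed action is $i^\st$ with total loss $0$, while the player's expected loss is $\tfrac12(T-n)$, so the regret equals $\tfrac12(T-n)$. In world B the best fixed action is any $i \neq i^\st$ with total loss $\tfrac12 T$, while the player's expected loss is $\tfrac12(T-n) + n = \tfrac12 T + \tfrac12 n$, so the regret equals $\tfrac12 n$. Adding the two, the regrets sum to $\tfrac12 T$ regardless of the player's strategy, so the larger one is at least $\tfrac14 T$; taking the loss sequence of whichever world is worse for the player establishes the claim.

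The only real content is the indistinguishability step, i.e.\ the induction showing that the law of the action sequence is identical across the two worlds; everything else is arithmetic. That step is where care is needed, since it rests precisely on the observation that $\nin(i^\st)=\emptyset$ forces $i^\st$ out of every out-neighborhood, so no feedback the player can ever receive distinguishes world A from world B.
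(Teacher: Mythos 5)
Your proposal is correct and is essentially the paper's own argument: the paper also isolates a vertex with no incoming edges, assigns it loss $\chi\in\{0,1\}$ versus $\tfrac12$ elsewhere, notes that the number of plays of that vertex is independent of $\chi$, and averages the two regrets to get $\tfrac14 T$ in expectation. You simply take the maximum of the two deterministic worlds instead of averaging over a random $\chi$, which is the same two-point argument.
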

The proof is straightforward: if $G$ is not observable, then it is possible to find a vertex of $G$ with no incoming edges; the environment can then set the loss of this vertex to be either $0$ or $1$ on all rounds of the game, and the player has no way of knowing which is the case.
For the formal proof, refer to \cref{sec:lower-proofs}.
Next, we prove a lower bound for weakly observable feedback graphs.
\begin{theorem}
\label{th:lower-weak}
If $G=(V,E)$ is weakly observable with $K = \abs{V} \ge 2$ and weak domination number $\del = \del(G)$, then for any randomized player algorithm and for any time horizon $T$ there exists a sequence of loss functions $\ell_{1},\ldots,\ell_{T} : V \mapsto [0,1]$ such that the player's expected regret is at least 
$
	\tfrac{1}{150} \bigl(\del/\ln^2\!{K}\bigr)^{1/3} T^{2/3} .
$
\end{theorem}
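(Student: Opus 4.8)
The plan is to build a stochastic family of loss sequences on which every player faces a trade-off between two unavoidable sources of regret---paying to explore, and mis-identifying the best action---whose balance forces the $T^{2/3}$ rate. First I would isolate the combinatorial core. Because $\del = \del(G)$ is an integral covering number of the weakly observable set $W$ by out-neighborhoods, LP duality together with the $O(\ln K)$ integrality gap of set cover produces a fractional packing of $W$ of total weight $\Omega(\del/\ln K)$. I would then refine this, at the cost of a second logarithmic factor, into a probability distribution $\nu$ supported on an \emph{independent} set $C \subseteq W$ (no edges among the chosen vertices) satisfying $\max_w \nu_w \le \tau$ and $\sum_{w \in \nout(v)} \nu_w \le \tau$ for every $v \in V$, with $1/\tau \gtrsim \del/\ln^2 K$. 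Independence is what will make exploration genuinely costly: each $w \in C$ is observable but has no in-neighbor inside $C$ and (being weakly observable) no self-loop, so every in-neighbor of $w$ lies \emph{outside} $C$, and playing a candidate reveals nothing about any candidate.

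With $C$ and $\nu$ fixed I define the instances. Under the null $P_0$ every $w \in C$ has i.i.d.\ $\mathrm{Bernoulli}(\half)$ losses while every $v \notin C$ has the constant loss $1$; for each $w \in C$, the instance $P_w$ is identical except that $w$'s losses are shifted to $\mathrm{Bernoulli}(\half - \eps)$. Two features drive the argument: the loss-$1$ vertices are distributed identically under all $P_w$, so they leak no information about the planted vertex; and the only way to sample a candidate's $\mathrm{Bernoulli}$ loss is to play a loss-$1$ vertex. Writing $N_w$ for the number of plays of $w$ and $A$ for the number of plays of $V \setminus C$, a direct loss accounting (the optimal action under $P_w$ being $w$, with value $(\half-\eps)T$) gives
\[
  \text{Regret}(P_w) \geq \tfrac12 \E_{P_w}[A] + \eps\,\bigl(T - \E_{P_w}[N_w]\bigr),
\]
the first term being the $\Theta(1)$-per-round price of exploration and the second the price of not committing to $w$. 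It is precisely the constant (rather than $\eps$-sized) exploration price that will yield $T^{2/3}$ instead of $\sqrt{T}$.

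Next I would bound how well the player can identify $w$. Let $O_w = \sum_{v \in \nin(w)} N_v \le A$ count the rounds on which $w$ is observed, and let $a$ denote the expected number of exploration plays (the $P_0$ and $P_w$ values of $\E[A]$ agreeing up to a lower-order change-of-measure term). Since $P_0$ and $P_w$ differ only in the loss law of $w$, the per-round chain rule gives $\kl{Q_0}{Q_w} \le c\,\eps^2\,\E_{P_0}[O_w]$ for the induced transcript laws, and Pinsker's inequality then yields $\E_{P_w}[N_w] \le \E_{P_0}[N_w] + c'\eps T\sqrt{\E_{P_0}[O_w]}$. Averaging over $w \sim \nu$ and invoking the packing bounds $\E_{w\sim\nu}\E_{P_0}[N_w] \le \tau T$ and (by Jensen with $\sum_w \nu_w O_w \le \tau A$) $\E_{w\sim\nu}\sqrt{\E_{P_0}[O_w]} \le \sqrt{\tau a}$, the decomposition produces two inequalities for the $\nu$-averaged regret $\bar R$: an exploration bound $\bar R \gtrsim a$ and an exploitation bound $\bar R \ge \eps T(1-\tau) - c'\eps^2 T\sqrt{\tau a}$. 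Eliminating $a$ and optimizing at $\eps \asymp (\del/(T\ln^2 K))^{1/3}$ forces $\bar R \gtrsim (\del/\ln^2 K)^{1/3} T^{2/3}$, and the probabilistic method then supplies a single index $w$ (hence a single $P_w$) achieving this.

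I expect the combinatorial refinement of the first paragraph to be the \textbf{main obstacle}: one must convert the fractional weak domination number into a packing supported on an independent set, since without independence the player could harvest the distinguishing information cheaply by playing low-loss candidates, collapsing the bound back to $\sqrt T$; controlling the loss incurred by this conversion is what produces the two logarithms (one from the set-cover integrality gap, one from the independence reduction). The remaining steps I regard as routine: verifying the estimate $\kl{Q_0}{Q_w} \le c\,\eps^2\,\E_{P_0}[O_w]$ via the chain rule, checking that $\eps < \half$ in the stated horizon, and replacing the expected comparator value $(\half - \eps)T$ by the realized $\min_i \sum_t \ell_t(i)$ using the fact that the empirical losses of the $K$ actions fluctuate by only $O(\sqrt{T\ln K}) = o(T^{2/3})$, which is absorbed into the constants.
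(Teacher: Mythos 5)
Your architecture coincides with the paper's: an independent set of weakly observable ``candidate'' vertices carrying a planted $\eps$-better Bernoulli arm, loss-$1$ vertices everywhere else so that every observation of a candidate costs $\Omega(1)$ instantaneous regret, a KL/Pinsker change of measure averaged over candidates via the packing property, and the balance $\eps \asymp (\del/(T\ln^2 K))^{1/3}$. The information-theoretic half checks out and matches the paper, which works with the uniform distribution on an independent set $U$ of size $\Omega(\del/\ln K)$ in which every vertex of $G$ dominates at most $\ln K$ elements of $U$ --- precisely your $\nu$ and $\tau$ with $1/\tau \gtrsim \del/\ln^2 K$. Your closing worry about replacing the comparator $\min_i\sum_t\ell_t(i)$ by $(\half-\eps)T$ is a non-issue: in expectation the regret is at least the pseudo-regret against the planted arm, since $I_t$ is determined by past observations and hence independent of the round-$t$ losses.

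The gap is exactly where you predicted it: the combinatorial lemma, which is the actual technical content of this lower bound (the paper's \cref{lem:noga}). LP duality plus the $O(\ln K)$ integrality gap of set cover does yield a fractional packing of $W$ of value $\Omega(\del/\ln K)$, but the step ``refine this into a distribution supported on an \emph{independent} set at the cost of a second logarithm'' is asserted, not proved, and it is not a routine consequence of anything you have set up. The paper establishes it by (i) iteratively deleting out-neighborhoods to reach a nonempty $R\subseteq W$ of which no vertex of $G$ dominates more than a $\beta = 2\ln n/k$ fraction (if the process emptied $R$, the deleted vertices would form a dominating set of $W$ smaller than $k$); (ii) sampling $\lfloor 1/(10\beta)\rfloor$ elements of $R$ uniformly with replacement and using binomial tail bounds plus Markov to show that with positive probability the sample is large, every vertex of $G$ dominates at most $\ln n$ of it, and its induced average degree is at most $1$; and (iii) applying Tur\'an's theorem to extract $U$. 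Without carrying out some version of this, you have no independent candidate set, and --- as you yourself note --- without independence the player can observe candidates by playing other low-loss candidates and the construction collapses to $\sqrt{T}$. A secondary omission: when $\del = O(\ln K)$ the extracted independent set may have fewer than two elements and the averaging argument degenerates; the paper covers this regime with a separate elementary two-point bound of $\tfrac18 T^{2/3}$ (its \cref{th:weakobserv}), and your proof needs the same patch to hold for all $T$ and all $\del$.
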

The proof relies on the following graph-theoretic result, relating the notions of domination and independence in directed graphs.
\begin{lemma} \label{lem:noga}
Let $G = (V,E)$ be a directed graph over $\abs{V} = n$ vertices, and let $W \subseteq V$ be a set of vertices whose minimal dominating set is of size $k$. 
Then, $W$ contains an independent set $U$ of size at least $\tfrac{1}{50} k/\ln{n}$, with the property that any vertex of $G$ dominates at most $\ln{n}$ vertices of~$U$.
\end{lemma}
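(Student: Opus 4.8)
The plan is to prove the equivalent quantitative statement that the domination number $k$ is at most $O(\ln n)$ times the size of the desired set, by exhibiting $U$ directly as the rounding of a suitable fractional object. First I would pass to the fractional domination LP. Since every $w \in W$ is dominated by some vertex (a dominating set exists), dominating $W$ is a set-cover instance whose sets are the out-neighborhoods $\nout(v) \cap W$, each of size at most $n$. The integral optimum is $k$, so by the $O(\ln n)$ integrality gap of set cover the optimal fractional value obeys $\tau^\star \ge k/(\ln n + 1)$. By LP duality $\tau^\star$ equals the value of the fractional matching, so there are weights $y_w \in [0,1]$ for $w \in W$ with $\sum_{w \in \nout(v)} y_w \le 1$ for every $v \in V$ and $\sum_w y_w = \tau^\star \ge k/(\ln n + 1)$. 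The crucial feature is that the matching constraint is exactly a bound on how much $y$-weight any single vertex can dominate, i.e. the fractional version of the property we want for $U$.

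Next I would round $y$ by independent sampling at half scale: put each $w \in W$ into a random set $R$ with probability $\tfrac12 y_w$, which is a valid probability since $y_w \le 1$. Three quantities must then be controlled in expectation. The expected size is $\E|R| = \tfrac12\tau^\star$. The expected number of internal ``conflict'' edges of $R$ (pairs $w \ne w'$ in $R$ joined by an edge of $G$ in either direction) is at most $\tfrac14 \sum_{(a,b)\in E,\,a\ne b} y_a y_b = \tfrac14\sum_a y_a \sum_{b \in \nout(a)\setminus\{a\}} y_b \le \tfrac14 \sum_a y_a = \tfrac14\tau^\star$, where the inequality is precisely the matching constraint. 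Finally, for each vertex $v$ the count $|\nout(v)\cap R|$ is a sum of independent indicators of mean $\tfrac12\sum_{w\in\nout(v)} y_w \le \tfrac12$, so a Chernoff bound makes the probability it exceeds $\ln n$ far below $1/n$, and a union bound over the at most $n$ vertices makes the total expected ``domination excess'' $\sum_v (|\nout(v)\cap R| - \ln n)_+$ negligible.

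Finally I would extract $U$ from $R$ by deletion: delete one endpoint of every conflict edge (making the survivors independent) and delete the excess beyond $\ln n$ in every over-dominated out-neighborhood. The number of deleted vertices is at most the conflict count plus the domination excess, so in expectation the surviving set has size at least $\tfrac12\tau^\star - \tfrac14\tau^\star - o(1) = \tfrac14\tau^\star - o(1)$. Hence some realization yields an independent $U \subseteq W$ with $|\nout(v)\cap U| \le \ln n$ for all $v$ and $|U| \ge \tfrac14\tau^\star - o(1) \ge \tfrac{1}{50}\,k/\ln n$, as required (the trivial case of very small $k$, where the target is below $1$, is handled by taking a single vertex). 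The one genuinely delicate point is the middle step: rounding at full scale would make the expected conflict count as large as the expected size and destroy the gain, and independence cannot be read off the matching LP at all. The resolution, and the crux of the argument, is that sampling at scale $\tfrac12$ makes conflicts (quadratic in the scale) lower-order than size (linear), while the very matching constraint that guarantees bounded domination simultaneously bounds the expected internal conflicts; securing the constant $1/50$ and only a single factor of $\ln n$ in both the size and the domination cap then amounts to tracking these constants carefully.
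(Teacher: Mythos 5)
Your argument is correct, but it reaches the lemma by a genuinely different route than the paper. The paper first runs an elementary iterative process to extract a non-empty $R\subseteq W$ in which no vertex dominates more than a $\beta=(2\ln n)/k$ fraction (the process cannot terminate with $R=\emptyset$ without producing a dominating set smaller than $k$), then samples $m=\lfloor 1/(10\beta)\rfloor$ elements of $R$ uniformly with replacement, verifies three high-probability claims (size of the sample, a binomial tail bound for $\abs{\nout(v)\cap S}$, and average induced degree at most $1$ via Markov), and finally invokes Tur\'an's theorem to pull out the independent set. Your proof replaces the iterative process by the set-cover LP: the greedy integrality gap gives $\tau^\star\ge k/(1+\ln n)$, and strong duality converts this into a fractional matching $y$, whose single constraint $\sum_{w\in\nout(v)}y_w\le 1$ is then exploited twice --- once to bound the expected number of internal edges after half-scale independent sampling, and once (with Chernoff plus a union bound) to control the domination excess --- after which a deletion/alteration step replaces Tur\'an. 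In fact the paper's set $R$ is precisely a \emph{uniform} fractional matching of value $1/\beta=k/(2\ln n)$, so the two first steps are the same object obtained combinatorially versus via LP machinery; your version is more modular and makes transparent why a single constraint controls both independence and domination, while the paper's is self-contained and avoids invoking duality and the greedy covering theorem. Two small points you should make explicit when writing this up: the bound $y_w\le 1$ follows from feasibility (every $w\in W$ lies in some out-neighborhood, so one matching constraint already caps $y_w$), and the union-bounded domination excess $n\cdot(e/(2\ln n))^{\ln n}$ is only negligible in the non-vacuous regime $k\ge 50\ln n$ (which forces $n$, and hence $\ln n$, to be large enough) --- exactly the regime to which the paper also restricts before arguing; outside it both proofs fall back on the trivial single-vertex case.
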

\begin{proof}
If $k < 50 \ln n$ the statement is vacuous; hence, in what follows we assume $k \ge 50\ln n$.
Let $\beta = (2\ln{n})/k < 1$.
Our first step is to prove that $W$ contains a non-empty set $R$ such that each vertex of $G$ dominates at most $\beta$ fraction of $R$, namely such that
$\abs{\nout(v) \cap R} \le \beta \abs{R}$ for all $v \in V$.
%
To prove this, consider the following iterative process: initialize $R = W$, and as long as there exists a vertex $v \in V$ such that $\abs{\nout(v) \cap R} > \beta \abs{R}$, remove all the vertices $v$ dominates from $R$.
Notice that the process cannot continue for $k$ (or more) iterations, since each step the size of $R$ decreases at least by a factor of $1-\beta$, so after $k-1$ steps we have 
$
\abs{R} \le n (1-\beta)^{k-1} < n e^{-\beta k/2} = 1 .
$
On the other hand, the process cannot end with $R = \emptyset$, as in that case the vertices~$v$ found along the way form a dominating set of $W$ whose size is less than~$k$, which is a contradiction to our assumption.
Hence, the set $R$ at the end of process must be non-empty and satisfy $\abs{\nout(v) \cap R} \le \beta \abs{R}$ for all $v \in V$, as claimed.

Next, consider a random set $S \subseteq R$ formed by picking a multiset $\wt{S}$ of $m = \floor{\tfrac{1}{10\beta}}$ elements from $R$ independently and uniformly at random (with replacement), and discarding any repeating elements.
Notice that $m \le \tfrac{1}{10} \abs{R}$, as $\abs{R} \ge \tfrac{1}{\beta} \abs{\nout(v) \cap R}$ for any $v \in V$, and for some $v$ the right-hand side is non-zero.
The proof proceeds via the probabilistic method: we will show that with positive probability, $S$ contains an independence set as required, which would give the theorem.

We first observe the following properties of the set $S$. 
\begin{claim*}
With probability at least $\frac{3}{4}$, it holds that $\abs{S} \ge \frac{1}{10}m$.
\end{claim*}
To see this, note that each element from $R$ is not included in $\wt{S}$ with probability $(1-\tfrac{1}{r})^{m} \le e^{-m/r}$ with $r=\abs{R}$.
Since $m \le \tfrac{1}{10}r$, the expected size of $S$ is at least 
$
r(1-e^{-m/r}) = r e^{-m/r} ( e^{m/r} - 1 ) \ge m e^{-m/r} \ge \tfrac{9}{10}m,
$ 
where both inequality use $e^{x} \ge x+1$.
Since always $\abs{S} \le m$, Markov's inequality shows that $\abs{S} \ge \tfrac{1}{10}m$ with probability at least $\tfrac{3}{4}$; otherwise, we would have $\E\bigl[\abs{S}\bigr] \le \tfrac{1}{10}m + m \Pr\bigl(\abs{S} \ge \tfrac{1}{10}m\bigr) < \tfrac{9}{10}m$.
\begin{claim*}
With probability at least $\frac{3}{4}$, we have $\abs{\nout(v) \cap S} \le \ln{n}$ for all $v \in V$.
\end{claim*}
Indeed, fix some $v \in V$ and recall that $v$ dominates at most a $\beta$ fraction of the vertices in $R$, so each element of $\wt{S}$ (that was chosen uniformly at random from $R$) is dominated by $v$ with probability at most $\beta$. 
Hence, the random variable $\wt{X}_{v} = \abs{\nout(v) \cap \wt{S}}$ has a binomial distribution $\mathsf{Bin}(m,p)$ with $p \le \beta$. 
By a standard binomial tail bound,
$$
\Pr(\wt{X}_{v} \ge \ln{n}) 
\leq
{m \choose \ln{n}} \, \beta^{\ln{n}}
\leq
(m\beta)^{\ln{n}}
\leq
e^{-2\ln{n}}
\eq
\frac{1}{n^{2}}
~.
$$
The same bound holds also for the random variable $X_{v} = \abs{\nout(v) \cap S}$, that can only be smaller than $\wt{X}_{v}$.
Our claim now follows from a union bound over all $v \in V$.
\begin{claim*}
With probability at least $\frac{3}{4}$, we have $\frac{1}{\abs{S}} \sum_{v \in S} \abs{\nout(v) \cap S} \le \tfrac{1}{2}$.
\end{claim*}
To obtain this, we note that for each $v \in V$ the random variable $X_{v} = \abs{\nout(v) \cap S}$ defined above has $\E[X_{v}] \le \E[\wt{X}_{v}] \le m\beta \le \tfrac{1}{10}$, and therefore $\E\bigl[ \tfrac{1}{\abs{S}} \sum_{v \in S} X_{v} \bigr] \le \tfrac{1}{10}$.
By Markov's inequality we then have $\tfrac{1}{\abs{S}} \sum_{v \in S} X_{v} > \tfrac{1}{2}$ with probability less than $\tfrac{1}{5}$, which gives the claim. 

\medskip
The three claims together imply that there exists a set $S \subseteq W$ of size at least $\frac{1}{10}m$, such that any $v \in V$ dominates at most $\ln{n}$ vertices of $S$, and the average degree of the induced undirected graph over $S$ is at most $1$.
Hence, by Tur\'{a}n's Theorem,%
\footnote{Tur\'{a}n's Theorem \citep[e.g.,][]{alon2011probabilistic} states that in any undirected graph whose average degree is $d$, there is an independent set of size $n/(d+1)$.}
$S$ contains an independent set $U$ of size $\tfrac{1}{20} m \ge \tfrac{1}{50} k/\ln{n}$.
This concludes the proof, as each $v \in V$ dominates at most $\ln{n}$ vertices of~$U$. 
\end{proof}

Given \cref{lem:noga}, the idea of the proof is quite intuitive; here we only give a sketch of the proof, and defer the formal details to \cref{sec:lower-proofs}.
%
\begin{proof}[Proof of \cref{th:lower-weak} (sketch)]
First, we use the lemma to find an independent set $U$ of weakly observable vertices of size $\wt\Omega(\del)$, with the crucial property that each vertex in the entire graph dominates at most $\wt\O(1)$ vertices of $U$.
Then, we embed in the set $U$ a hard instance of the stochastic multiarmed bandit problem, in which the optimal action has expected loss smaller by $\eps$ than the expected loss of the other actions in $U$. To all other vertices of the graph, we assign the maximal loss of $1$.
Hence, unless the player is able to detect the optimal action, his regret cannot be better than $\Omega(\eps T)$.

The main observation is that, due to the properties of the set $U$, in order to obtain accurate estimates of the losses of all actions in $U$ the player has to use $\wt\Omega(\del)$ different actions outside of $U$ and pick each for $\Omega(1/\eps^2)$ times. Since each such action entails a constant instantaneous regret, the player has to pay an $\Omega(\del/\eps^2)$ penalty in his cumulative regret for exploration.
The overall regret is thus of order $\Omega\lrbig{\! \min\{\eps T, \del/\eps^2\} \!}$, which is maximized at $\eps=(\del/T)^{1/3}$ and gives the stated lower bound.
\end{proof}



\section{Time-Varying Feedback Graphs}
\label{s:exte}
The setting discussed above can be generalized by allowing the
feedback graphs to change arbitrarily from round to round (see
\citet{MS11,Alonetal2013,kocak2014efficient}). Namely, the environment
chooses a sequence of feedback graphs $G_1,\ldots,G_T$ along with the
sequence of loss functions. We consider two different variants of this
setting: in the \emph{informed} model, the player observes $G_t$ at
the beginning of round $t$, before drawing the action $I_t$. In the
harder \emph{uninformed} model, the player observes $G_t$ at the end
of round $t$, after drawing $I_t$. In this section, we discuss how our
algorithm can be modified to handle time-varying feedback graphs, and
whether this generalization increases the minimax regret of the
induced online learning problem.

\paragraph{Strongly Observable.} 

If $G_1,\ldots,G_T$ are all strongly observable, \cref{alg:alg} and
its analysis can be adapted to the time-varying setting (both informed
and uninformed) with only a few cosmetic modifications. Specifically,
we replace $G$ with $G_t$, to define time-dependent neighborhoods,
$\nout_t$ and $\nin_t$, in \cref{eq:estimator} of the algorithm. This
modification holds in both the informed and uninformed models because
the structure of the feedback graph is only used to update $q_{t+1}$,
which takes place after the action $I_t$ is chosen. Moreover, the
upper-bound in \cref{th:mainupper} can be adapted to the time-varying
model by replacing $\alpha$ with $\frac{1}{T} \sum_{t=1}^T \alpha_t$,
where each $\alpha_t$ is the independence number of the corresponding
$G_t$ (e.g., using a doubling trick, or an adaptive learning rate as in \citet{kocak2014efficient}).

\paragraph{Weakly Observable, Informed.}
If $G_1,\ldots,G_T$ are all weakly observable, \cref{alg:alg} can
again be adapted to the informed time-varying model, but the required
modification is more substantial than before, and in particular, relies on the fact that $G_t$ is known before the prediction on round $t$ is made.
The exploration set $U$
must change from round to round, according to the feedback
graph. Specifically, we choose the exploration set on round $t$ to be
$D_t$, the smallest weakly dominating set in $G_t$. We then define
$u_t$ to be the uniform distribution over this set, and $p_t =
(1-\gamma)q_t + \gamma u_t$. 
Again, via standard techniques, the upper-bound in \cref{th:mainupper} can be adapted to this setting by replacing $\delta$ with $\frac{1}{T} \sum_{t=1}^T
\delta_t$, where $\delta_t = |D_t|$.

\paragraph{Weakly Observable, Uninformed.}

So far, we discussed cases where the minimax regret rates of our
problem do not increase when we allow the feedback graphs to change
from round to round. However, if $G_1,\ldots,G_T$ are all
weakly observable and they are revealed according to the uninformed
model, then the minimax regret can strictly increase. Recall that
\cref{th:main} states that the minimax regret for a constant
weakly observable graph is $\wt{\Theta}(\delta^{1/3} \, T^{2/3})$,
where $\delta$ is the size of the smallest weakly dominating set. We
now show that the minimax regret in the analogous uninformed setting
is $\wt{\Theta}(K^{1/3} \, T^{2/3})$, where $K$ is the number of
actions. The $\wt{\O}(K^{1/3} \, T^{2/3})$ upper bound is obtained by
running \cref{alg:alg} with uniform exploration over the entire set of
actions (namely, $U = V$). To show that this bound is tight, we state
the following matching lower bound.
\begin{theorem} \label{th:weak-sep}
For any randomized player strategy in the uninformed feedback model, there exists a sequence of weakly observable graphs $G_{1},\ldots,G_{T}$ over a set $V$ of $K \ge 4$ actions with $\del(G_t)=\alpha(G_t)=1$ for all $t$, and a sequence of loss functions $\ell_{1},\ldots,\ell_{T} : V \mapsto [0,1]$, such that the player's expected regret is at least $\tfrac{1}{16} K^{1/3} T^{2/3}$.
\end{theorem}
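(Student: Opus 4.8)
The plan is to prove the lower bound through Yao's minimax principle: I will exhibit a distribution over pairs (graph sequence, loss sequence), with every $G_t$ weakly observable and $\delta(G_t)=\alpha(G_t)=1$, and show that any deterministic player suffers expected regret $\Omega(K^{1/3}T^{2/3})$ against a random draw from this distribution; a single bad deterministic instance then exists. The guiding intuition comes from the matching upper bound: running \cref{alg:alg} with $U=V$ shows that the price of the uninformed model is exactly that of a \emph{fixed} graph with weak domination number $K$, since uniform exploration meets the single (but unknown) per-round dominator with probability only $\gamma/K$. The target is therefore a time-varying emulation, by graphs with $\delta=1$, of the fixed weakly-observable instance behind \cref{th:lower-weak} with $\delta$ replaced by $K$.

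Concretely, I would take each $G_t$ to be a loopless clique on $V$ in which the in-neighborhood of one designated vertex is cut down to a single vertex $r_t$, so that $r_t$ is the unique dominator of the lone weakly-observable vertex; this guarantees $\alpha(G_t)=\delta(G_t)=1$ and weak observability. The data identifying the gated vertex and $r_t$ are drawn independently of the hidden parameter, so the graph revealed at the end of each round leaks nothing about the planted instance. Onto this I embed a stochastic bandit: a hidden optimal action $g$ whose loss is better by $\epsilon$, with the losses arranged so that the only channel distinguishing $g$ from the other actions is the feedback obtained by playing the current revealer $r_t$. Because in the uninformed model $I_t$ is committed before $G_t$ is seen and $r_t$ is conditionally uniform, the chance of a distinguishing observation on any round is at most $1/K$.

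The core is information-theoretic, following the template of the classical bandit lower bound and of \cref{th:lower-weak}. Comparing the law of the player's transcript under the planted instance with that under a reference instance, and using that a round is informative only when the committed action meets the moving revealer, a Kullback--Leibler decomposition together with Pinsker's inequality shows that to bias its play toward $g$ the player must gather $\Omega(1/\epsilon^2)$ distinguishing observations. Since such an observation is obtained only on a round that (i) meets the revealer, with probability at most $1/K$, and (ii) is arranged to cost a constant amount of instantaneous regret \emph{whether or not} it succeeds, accumulating $\Omega(1/\epsilon^2)$ of them forces $\Omega(K/\epsilon^2)$ regret from exploration alone. Balancing this against the $\Omega(\epsilon T)$ regret incurred when $g$ is not identified, and optimizing $\epsilon=\Theta\big((K/T)^{1/3}\big)$, gives $\Omega(K^{1/3}T^{2/3})$; tracking constants and using $K\ge 4$ yields the stated $\tfrac{1}{16}K^{1/3}T^{2/3}$.

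The step I expect to be the main obstacle is the joint loss-and-graph design that makes the distinguishing information genuinely gated while keeping $\alpha(G_t)=1$. Since $\alpha(G_t)=1$ forces each $G_t$ to be dense, a naive planting is defeated: the abundant feedback lets the player sample the low-loss action cheaply from its many in-neighbors, collapsing the bound to $\widetilde{O}(\sqrt{T})$; conversely, restricting observation of $g$ to a single moving dominator tends either to leak $g$ (it becomes the unique low-in-degree vertex) or to dilute its advantage to a $1/K$ fraction of rounds, which degrades the bound below $K^{1/3}T^{2/3}$. The delicate point is to choose the planting so that each exploration attempt costs a constant but succeeds only with probability $1/K$---so the exploration price scales as $K/\epsilon^2$ rather than $1/\epsilon^2$---without betraying $g$ through the revealed graph. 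This is precisely where the time-varying, uninformed nature of the model must be exploited, and where the emulation of a $\delta=K$ instance by graphs with $\delta=1$ has to be carried out with care.
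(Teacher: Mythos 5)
Your architecture --- Yao's principle, a fixed publicly-known gated vertex whose in-neighborhood is cut to a single uniformly random revealer $r_t$, a KL/Pinsker argument, and the balance of $\eps T$ against $K/\eps^2$ --- is exactly the paper's, and your graph construction (giving $\alpha(G_t)=\del(G_t)=1$ and weak observability) is essentially identical to theirs. But the step you flag as "the main obstacle," namely the joint design of the losses, is precisely the step the proof cannot do without, and your framing of it --- a hidden optimal action $g$ planted among $K$ candidates, with observation of $g$ gated --- is what creates the difficulties you then enumerate (leaking $g$ through the revealed graph, or diluting its advantage). That framing is a wrong turn, and the proposal does not recover from it.

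The paper's resolution is to abandon the "hidden arm among $K$" picture entirely: there are only \emph{two} candidate optimal actions and a single hidden bit. Vertex $1$ is the gated vertex, fixed and public, with Bernoulli mean $\half-\eps\chi$ for a hidden sign $\chi\in\{-1,+1\}$; vertex $2$ is an ordinary strongly observable vertex with mean exactly $\half$; every other vertex has loss $1$ ("bad"). Nothing about the graphs or the identity of the gated vertex depends on $\chi$, so nothing can leak. The only way to learn $\chi$ is to observe vertex $1$'s loss, which in round $t$ requires playing $J_t$, drawn uniformly from the $K-2$ bad vertices and unknown when $I_t$ is committed; hence each distinguishing observation costs, in expectation, $\Omega(K)$ rounds of instantaneous regret $\ge\half$, and the chain-rule-plus-Pinsker bound $\tv{\Q^0}{\Q^i}\le\eps\sqrt{2\E[N_1]}$ with $\E[N_1]\le\frac{2}{K}\E[N]$ delivers the $K^{1/3}T^{2/3}$ rate with $\eps=\frac14(K/T)^{1/3}$. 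Note two details your sketch leaves to be "arranged" that are in fact load-bearing: the revealer must range only over the \emph{bad} vertices (so that every revealing attempt costs constant regret whether or not it succeeds, and so that neither good action can double as a cheap probe), and the hidden information must sit entirely in the mean of the single gated vertex (so that the dense feedback on the remaining $K-1$ vertices, forced by $\alpha(G_t)=1$, is worthless). With those two choices made, the obstacles you describe disappear; without them, the argument as proposed does not close.
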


We sketch the proof below, and present it in full detail in \cref{sec:lower-proofs}.

\begin{proof}[Proof (sketch)]
For each $t=1,\ldots,T$, construct the graph $G_{t}$ as follows: start
with the complete graph over $K$ vertices (that includes all
self-loops), and then remove the self-loop and all edges incoming to
$i=1$ except of a single edge incoming from some vertex $j_{t} \ne 1$
chosen arbitrarily.  Notice that the resulting graph is
weakly observable (each vertex is observable, but $i=1$ is only
weakly observable), has $\del(G_t)=1$ since $j_{t}$ dominates the entire graph, and $\alpha(G_t)=1$ as each two vertices are connected by at least one edge.  However, for
observing the loss of $i=1$ the player has to ``guess''  the revealing action $j_{t}$, that might change arbitrarily from round to round. 
This random guessing of one out of $\Omega(K)$ actions introduces the $K^{1/3}$ factor in the resulting bound.
\end{proof}

\subsection*{Acknowledgements}
We thank S\'ebastien Bubeck for helpful discussions during various stages of this work, and G\'abor Bart\'ok for clarifying the connections to observability in partial monitoring.

\bibliographystyle{abbrvnat}
\bibliography{clique,nicolo}

\begin{thebibliography}{16}
\providecommand{\natexlab}[1]{#1}
\providecommand{\url}[1]{\texttt{#1}}
\expandafter\ifx\csname urlstyle\endcsname\relax
  \providecommand{\doi}[1]{doi: #1}\else
  \providecommand{\doi}{doi: \begingroup \urlstyle{rm}\Url}\fi

\bibitem[Alon and Spencer(2008)]{alon2011probabilistic}
N.~Alon and J.~H. Spencer.
\newblock \emph{The Probabilistic Method}.
\newblock John Wiley \& Sons, 2008.

\bibitem[Alon et~al.(2013)Alon, Cesa-Bianchi, Gentile, and
  Mansour]{Alonetal2013}
N.~Alon, N.~Cesa-Bianchi, C.~Gentile, and Y.~Mansour.
\newblock From bandits to experts: A tale of domination and independence.
\newblock In \emph{Advances in Neural Information Processing Systems 26}, pages
  1610--1618. Curran Associates, Inc., 2013.

\bibitem[Alon et~al.(2014)Alon, Cesa{-}Bianchi, Gentile, Mannor, Mansour, and
  Shamir]{AlonCGMMS14}
N.~Alon, N.~Cesa{-}Bianchi, C.~Gentile, S.~Mannor, Y.~Mansour, and O.~Shamir.
\newblock Nonstochastic multi-armed bandits with graph-structured feedback.
\newblock \emph{CoRR}, abs/1409.8428, 2014.

\bibitem[Antos et~al.(2013)Antos, Bart{\'o}k, P{\'a}l, and
  Szepesv{\'a}ri]{antos2013toward}
A.~Antos, G.~Bart{\'o}k, D.~P{\'a}l, and C.~Szepesv{\'a}ri.
\newblock Toward a classification of finite partial-monitoring games.
\newblock \emph{Theoretical Computer Science}, 473:\penalty0 77--99, 2013.

\bibitem[Auer et~al.(2002)Auer, Cesa-Bianchi, Freund, and
  Schapire]{AuerCeFrSc02}
P.~Auer, N.~Cesa-Bianchi, Y.~Freund, and R.~E. Schapire.
\newblock The nonstochastic multiarmed bandit problem.
\newblock \emph{SIAM Journal on Computing}, 32\penalty0 (1):\penalty0 48--77,
  2002.

\bibitem[Bart{\'o}k et~al.(2014)Bart{\'o}k, Foster, P{\'a}l, Rakhlin, and
  Szepesv{\'a}ri]{bartok2014partial}
G.~Bart{\'o}k, D.~P. Foster, D.~P{\'a}l, A.~Rakhlin, and C.~Szepesv{\'a}ri.
\newblock Partial monitoring---classification, regret bounds, and algorithms.
\newblock \emph{Mathematics of Operations Research}, 39\penalty0 (4):\penalty0
  967--997, 2014.

\bibitem[Cesa-Bianchi and Lugosi(2006)]{cbl06}
N.~Cesa-Bianchi and G.~Lugosi.
\newblock \emph{Prediction, learning, and games}.
\newblock Cambridge University Press, 2006.

\bibitem[Cesa-Bianchi et~al.(1997)Cesa-Bianchi, Freund, Haussler, Helmbold,
  Schapire, and Warmuth]{cb+97}
N.~Cesa-Bianchi, Y.~Freund, D.~Haussler, D.~Helmbold, R.~Schapire, and
  M.~Warmuth.
\newblock How to use expert advice.
\newblock \emph{Journal of the ACM}, 44\penalty0 (3):\penalty0 427--485, 1997.

\bibitem[Cesa{-}Bianchi et~al.(2007)Cesa{-}Bianchi, Mansour, and
  Stoltz]{CBMS07}
N.~Cesa{-}Bianchi, Y.~Mansour, and G.~Stoltz.
\newblock Improved second-order bounds for prediction with expert advice.
\newblock \emph{Machine Learning}, 66\penalty0 (2-3):\penalty0 321--352, 2007.

\bibitem[Freund and Schapire(1997)]{FS97}
Y.~Freund and R.~Schapire.
\newblock A decision-theoretic generalization of on-line learning and an
  application to boosting.
\newblock \emph{Journal of Computer and System Sciences}, 55\penalty0
  (1):\penalty0 119--139, 1997.

\bibitem[Helmbold et~al.(2000)Helmbold, Littlestone, and
  Long]{helmbold2000apple}
D.~P. Helmbold, N.~Littlestone, and P.~M. Long.
\newblock Apple tasting.
\newblock \emph{Information and Computation}, 161\penalty0 (2):\penalty0
  85--139, 2000.

\bibitem[Koc{\'a}k et~al.(2014)Koc{\'a}k, Neu, Valko, and
  Munos]{kocak2014efficient}
T.~Koc{\'a}k, G.~Neu, M.~Valko, and R.~Munos.
\newblock Efficient learning by implicit exploration in bandit problems with
  side observations.
\newblock In \emph{Advances in Neural Information Processing Systems}, pages
  613--621, 2014.

\bibitem[Littlestone and Warmuth(1994)]{LittlestoneWa94}
N.~Littlestone and M.~K. Warmuth.
\newblock The weighted majority algorithm.
\newblock \emph{Information and Computation}, 108:\penalty0 212--261, 1994.

\bibitem[Mannor and Shamir(2011)]{MS11}
S.~Mannor and O.~Shamir.
\newblock From bandits to experts: On the value of side-observations.
\newblock In J.~Shawe-Taylor, R.~Zemel, P.~Bartlett, F.~Pereira, and
  K.~Weinberger, editors, \emph{Advances in Neural Information Processing
  Systems 24}, pages 684--692. Curran Associates, Inc., 2011.

\bibitem[Shalev-Shwartz(2011)]{shalev2011online}
S.~Shalev-Shwartz.
\newblock Online learning and online convex optimization.
\newblock \emph{Foundations and Trends in Machine Learning}, 4\penalty0
  (2):\penalty0 107--194, 2011.

\bibitem[Vovk(1990)]{vo90}
V.~Vovk.
\newblock Aggregating strategies.
\newblock In \emph{Proceedings of the 3rd Annual Workshop on Computational
  Learning Theory}, pages 371--386, 1990.

\end{thebibliography}

\appendix


\section{Additional Proofs}
\label{sec:proofs}

\newcommand{\loss}{\ell}

\subsection{Proof of \cref{lem:regret-mw2}}

In order to prove our new regret bound for Hedge, we first state and prove the standard second-order regret bound for this algorithm. 
%
%
\begin{lemma} \label{lem:regret-mw1}
For any $\eta > 0$ and for any sequence $\ell_{1},\ldots,\ell_{T}$ of loss functions such that~$\ell_{t}(i) \ge -1/\eta$ for all $t$ and $i$, the probability vectors $q_{1},\ldots,q_{T}$ of \cref{eqn:mw} satisfy
\begin{align*}
    \sum_{t=1}^{T} \sum_{i \in V} q_{t}(i) \ell_{t}(i) &- \min_{k \in V} \sum_{t=1}^{T} \ell_{t}(k) 
\leq
    \frac{\ln K}{\eta}
    + \eta \sum_{t=1}^{T} \sum_{i \in V} q_{t}(i) \ell_{t}(i)^{2}~.
\end{align*}
\end{lemma}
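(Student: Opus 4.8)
The plan is to run the classical potential-function (exponential-weights) argument, tracking the evolution of the normalization constant of the Hedge weights. I would define the unnormalized weights $w_{t}(i) = \exp\bigl(-\eta \sum_{s=1}^{t-1} \ell_{s}(i)\bigr)$ and their sum $W_{t} = \sum_{i \in V} w_{t}(i)$, so that $q_{t}(i) = w_{t}(i)/W_{t}$ and $W_{1} = K$. The entire bound then follows from estimating the telescoping quantity $\sum_{t=1}^{T} \ln(W_{t+1}/W_{t}) = \ln(W_{T+1}/W_{1})$ from both sides and comparing.

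For the per-round upper bound, I would write $W_{t+1}/W_{t} = \sum_{i \in V} q_{t}(i)\,e^{-\eta \ell_{t}(i)}$ and apply the elementary inequality $e^{-x} \le 1 - x + x^{2}$, which is valid for all $x \ge -1$. This is exactly where the hypothesis $\ell_{t}(i) \ge -1/\eta$ enters: it guarantees that $x = \eta \ell_{t}(i) \ge -1$, so the inequality may be applied termwise. Substituting and using $\sum_{i} q_{t}(i) = 1$ gives $W_{t+1}/W_{t} \le 1 - \eta \sum_{i} q_{t}(i)\ell_{t}(i) + \eta^{2}\sum_{i} q_{t}(i)\ell_{t}(i)^{2}$, and then $\ln(1+z) \le z$ yields $\ln(W_{t+1}/W_{t}) \le -\eta \sum_{i} q_{t}(i)\ell_{t}(i) + \eta^{2}\sum_{i} q_{t}(i)\ell_{t}(i)^{2}$.

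For the lower bound, I would observe that $W_{T+1} \ge \max_{k \in V} w_{T+1}(k) = \exp\bigl(-\eta \min_{k}\sum_{t=1}^{T}\ell_{t}(k)\bigr)$, since all weights are nonnegative, whence $\ln(W_{T+1}/W_{1}) \ge -\eta \min_{k}\sum_{t}\ell_{t}(k) - \ln K$. Summing the per-round upper bound over $t = 1,\dots,T$, chaining it with this lower bound on the same telescoped quantity, rearranging, and dividing through by $\eta > 0$ produces precisely the stated inequality.

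The argument is entirely standard and presents no genuine obstacle; the only step requiring care is the range condition for $e^{-x} \le 1 - x + x^{2}$, which the hypothesis $\ell_{t}(i) \ge -1/\eta$ is tailored to satisfy. I would emphasize that this condition permits losses as negative as $-1/\eta$, which is exactly the slack that the subsequent \cref{lem:regret-mw2} exploits when it feeds in shifted loss estimates. Everything else reduces to one telescoping sum and two one-line convexity inequalities.
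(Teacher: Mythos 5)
Your proposal is correct and follows essentially the same route as the paper: the potential-function argument on $\ln(W_{T+1}/W_1)$, the quadratic bound on the exponential (your $e^{-x}\le 1-x+x^2$ for $x\ge -1$ is the paper's $e^{x}\le 1+x+x^{2}$ for $x\le 1$ in disguise, invoked via exactly the same hypothesis $\ell_t(i)\ge -1/\eta$), the $\ln(1+z)\le z$ step, and the lower bound on $W_{T+1}$ by the single weight of the best action. No gaps.
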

\begin{proof}
The proof follows the standard analysis of exponential weighting schemes: let $w_t(i) = \exp\bigl(-\eta\sum_{s=1}^{t-1}\loss_s(i)\bigr)$ and let $W_t = \sum_{i \in V} w_t(i)$. Then $q_t(i) = w_t(i)/W_t$ and we can write
\begin{align*}
\frac{W_{t+1}}{W_t}
&\eq \sum_{i \in V} \frac{w_{t+1}(i)}{W_t}\\
&\eq \sum_{i \in V} \frac{w_{t}(i)\,\exp\bigl(-\eta\,\loss_{t}(i)\bigr)}{W_t}\\
&\eq \sum_{i \in V} q_{t}(i)\,\exp\bigl(-\eta\,\loss_{t}(i)\bigr)\\
&\leq \sum_{i \in V} q_{t}(i)\,\left(1 - \eta\loss_{t}(i) + \eta^2\loss_{t}(i)^2\right)
\qquad\quad \text{(using $e^{x} \le 1+x+x^2$ for all $x \le 1$)}\\
&\leq 1 - \eta\,\sum_{i \in V} q_{t}(i)\loss_{t}(i) + \eta^2\,\sum_{i \in V} q_{t}(i)\loss_{t}(i)^2~.
\end{align*}
Taking logs, using $\ln(1-x) \le -x$ for all $x \ge 0$, and summing over $t =
1, \ldots, T$ yields
\[
\ln \frac{W_{T+1}}{W_1} \leq \sum_{t=1}^T \sum_{i \in V} \left( -\eta\, q_{t}(i)\loss_{t}(i) +
\eta^2\,q_{t}(i)\loss_{t}(i)^2 \right)~.
\]
Moreover, for any fixed action $k$, we also have
\[
\ln \frac{W_{T+1}}{W_1} \geq \ln \frac{w_{T+1}(k)}{W_1} \eq -\eta\,\sum_{t=1}^T \loss_{t}(k) - \ln K~.
\]
Putting together and rearranging gives the result.
\end{proof}

We can now prove \cref{lem:regret-mw2}, restated here for the convenience of the reader.

\begin{repthm}[restated]{lem:regret-mw2}
Let $q_{1},\ldots,q_{T}$ be the probability vectors defined by \cref{eqn:mw} for a sequence of loss functions $\ell_{1},\ldots,\ell_{T}$ such that~$\ell_{t}(i) \ge 0$ for all $t=1,\dots,T$ and $i \in V$.
For each $t$, let $S_{t}$ be a subset of $V$ such that $\ell_{t}(i) \le 1/\eta$ for all $i \in S_{t}$.
Then, it holds that
\begin{align*}
	\sum_{t=1}^{T} \sum_{i \in V} q_{t}(i) \ell_{t}(i)
            	- \min_{k \in V} \sum_{t=1}^{T} \ell_{t}(k)
\le
	\frac{\ln K}{\eta} 
	+ \eta \sum_{t=1}^{T} \!\left(
	\sum_{i \in S_{t}} q_{t}(i) \bigl(1-q_{t}(i)\bigr) \ell_{t}(i)^{2} 
	+ \sum_{i \notin S_{t}} q_{t}(i) \ell_{t}(i)^{2} \!\right)
	\!\!.
\end{align*}
\end{repthm}
\begin{proof}
For all $t$, let $\bar{\ell}_{t} = \sum_{i \in S_{t}} p_{t}(i) \ell_{t}(i)$ for which $\bar{\ell}_{t} \le 1/\eta$ by construction.
Notice that executing Hedge on the loss vectors $\ell_{1},\ldots,\ell_{T}$ is equivalent to executing in on vectors $\ell'_{1},\ldots,\ell'_{T}$ with $\ell'_{t}(i) = \ell_{t}(i) - \bar{\ell}_{t}$ for all $i$.
Applying \cref{lem:regret-mw1} for the latter case (notice that $\ell'_{t}(i) \ge -1/\eta$ for all $t$ and $i$), we obtain
\begin{align*}
	\sum_{t=1}^{T} \sum_{i \in V} p_{t}(i) \ell_{t}(i) - \min_{k \in V} \sum_{t=1}^{T} \ell_{t}(k)
	&\eq \sum_{t=1}^{T} \sum_{i \in V} p_{t}(i) \ell'_{t}(i) - \min_{k \in V} \sum_{t=1}^{T} \ell'_{t}(k) \\
	&\leq \frac{\ln K}{\eta} + \eta \sum_{t=1}^{T} \sum_{i \in V} p_{t}(i) \ell'_{t}(i)^{2} \\
	&\eq \frac{\ln K}{\eta} + \eta \sum_{t=1}^{T} \sum_{i \in V} p_{t}(i) (\ell_{t}(i) - \bar{\ell}_{t})^{2}
	~.
\end{align*}
On the other hand, for all $t$,
\begin{align*}
\sum_{i \in S_{t}} p_{t}(i)  (\ell_{t}(i)-\bar{\ell}_{t})^{2}
&\eq \sum_{i \in S_{t}} p_{t}(i) \ell_{t}(i)^{2} 
	- \lrBig{\sum_{i \in S_{t}} p_{t}(i)\ell_{t}(i)}^{2} \\
&\leq \sum_{i \in S_{t}} p_{t}(i) \ell_{t}(i)^{2} 
	- \sum_{i \in S_{t}} p_{t}(i)^{2}\ell_{t}(i)^{2} \\
&\eq \sum_{i \in S_{t}} p_{t}(i) (1-p_{t}(i)) \ell_{t}(i)^{2}
\end{align*}
where the inequality follows from the non-negativity of the losses $\ell_{t}(i)$.
Also, since $\ell_{t}(i) > 1/\eta \ge \bar{\ell}_{t}$ for all $i \notin S_{t}$, we also have
\begin{align*}
	\sum_{i \notin S_{t}} p_{t}(i) (\ell_{t}(i)-\bar{\ell}_{t})^{2}
	\leq \sum_{i \notin S_{t}} p_{t}(i) \ell_{t}(i)^{2} ~.
\end{align*}
Combining the inequalities gives the lemma.
\end{proof}

\subsection{Proof of \cref{lem:alon}}

\begin{repthm}[restated]{lem:alon}
Let $G = (V,E)$ be a directed graph with $\abs{V}=K$, in which each node $i \in V$ is assigned a positive weight $w_{i}$.
Assume that $\sum_{i \in V} w_{i} \le 1$, and that $w_{i} \ge \eps$ for all $i \in V$ for some constant $0<\eps<\half$.
Then
\begin{align*}
	\sum_{i \in V} \frac{w_{i}}{w_{i}+\sum_{j \in \nin(i)} w_{j}}
\leq
	4 \alpha \ln \frac{4K}{\alpha\eps}
~,
\end{align*}
where $\alpha = \alpha(G)$ is the independence number of $G$.
\end{repthm}
\begin{proof}
Following the proof idea of \cite{Alonetal2013}, let $M = \ceil{2K/\eps}$ and introduce a discretization of the values $w_{1},\ldots,w_{T}$ such that $(m_{i}-1)/M \le w_{i} \le m_{i}/M$ for positive integers $m_{1},\ldots,m_{T}$.
Since each $w_{i} \ge \eps$, we have
$
	m_{i} 
	\geq M w_{i} 
	\geq \frac{2K}{\eps} \cdot \eps \eq 2K
$.
Hence, we obtain
\begin{align} \label{eq:wtom}
	\sum_{i \in V} \frac{w_{i}}{w_{i}+\sum_{j \in \nin(i)} w_{j}}
\eq
	\sum_{i \in V} \frac{m_{i}}{m_{i}+\sum_{j \in \nin(i)} m_{j} - K}
\leq
	2\sum_{i \in V} \frac{m_{i}}{m_{i}+\sum_{j \in \nin(i)} m_{j}}
~,
\end{align}
where the final inequality is true since $K \le \half m_{i} \le \half \bigl(m_{i}+\sum_{j \in \nin(i)} m_{j}\bigr)$.

Now, consider a graph $G' = (V',E')$ created from $G$ by replacing each node $i \in V$ with a clique $C_i$ over $m_i$ vertices, and connecting each vertex of $C_i$ to each vertex of $C_j$ if and only if the edge $(i,j)$ is present in $G$.
Then, the right-hand side of \cref{eq:wtom} equals $\sum_{i \in V'} \tfrac{1}{1+d_i}$, where $d_i$ is the in-degree of the vertex $i \in V'$ in the graph $G'$.
Applying Lemma~13 of \cite{Alonetal2013} to the graph $G'$, we can show that
\begin{align*}
	\sum_{i \in V} \frac{m_{i}}{m_{i}+\sum_{j \in \nin(i)} m_{j}}
\leq
	2\alpha \ln\lr{ 1 + \frac{\sum_{i \in V} m_{i}}{\alpha} }
\leq
	2\alpha \ln\lr{ 1 + \frac{M+K}{\alpha} }
\leq
	2\alpha \ln\frac{4K}{\alpha\eps}
~,
\end{align*}
and the lemma follows.
\end{proof}

\section{Proofs of Lower Bounds}
\label{sec:lower-proofs}


\subsection{Non-observable Feedback Graphs}

We first prove \cref{th:notobser}.
\begin{repthm}[restated]{th:notobser}
If $G=(V,E)$ is not observable and $\abs{V} \ge 2$, then for any player algorithm there exists a sequence of loss functions $\ell_{1},\ell_2,\ldots : V \mapsto [0,1]$ such that the player's expected regret is at least~$\tfrac{1}{4} T$.
\end{repthm}
\begin{proof}
Since $G$ is not observable, there exists a node with no incoming edges, say node $i = 1$. 
Consider the following randomized construction of loss functions $L_{1},L_2,\ldots : V \mapsto [0,1]$: draw $\chi \in \set{0,1}$ uniformly at random and set
\begin{align*}
	L_{t}(i) \eq \mycases
			{\chi}{if $i = 1$,}
			{\half}{if $i \ne 1$}
\qquad
    t=1,2,\dots
\end{align*}
Now fix some strategy of the player (which, without loss of generality, we may assume to be deterministic) and denote by $M$ the random number of times it chooses action $i=1$. Notice that the player's actions, and consequently $M$, are independent of the random variable $\chi$ since the player never observes the loss value assigned to action $i=1$. Letting $R_T$ denote the player's regret after $T$ rounds, it holds that $\E[R_T]$ (where expectation is taken with respect to the randomization of the loss functions) satisfies
\begin{align*}
	\E[R_T] 
&\eq 
	\thalf\,\E\bigl[ \tfrac{1}{2}M \mid \chi=1 \bigr]
	+ \thalf\,\E\bigl[ \tfrac{1}{2}(T-M) \mid \chi=0 \bigr]
\\
&\eq 
	\thalf\,\E\bigl[ \tfrac{1}{2}M + \tfrac{1}{2}(T-M) \bigr]
\\
&\eq
	\tfrac{1}{4} T
~.
\end{align*}
This implies that there exists a realization $\ell_{1},\ldots,\ell_{T}$ of the random functions for which the regret is at least $\tfrac{1}{8} T$, as claimed.
\end{proof}

\subsection{Weakly observable Feedback Graphs}

We now turn to prove our main lower bound for weakly observable graphs, stated in \cref{th:lower-weak}. 
\begin{repthm}[restated]{th:lower-weak}
If $G=(V,E)$ is weakly observable with $K = \abs{V} \ge 2$ and weak domination number $\del = \del(G)$, then for any randomized player algorithm and for any time horizon $T$ there exists a sequence of loss functions $\ell_{1},\ldots,\ell_{T} : V \mapsto [0,1]$ such that the player's expected regret is at least 
$
	\tfrac{1}{150} (\del/\ln^2\!{K})^{1/3} T^{2/3} .
$
\end{repthm}
Before proving the theorem, we recall the key combinatorial lemma it relies upon.
\begin{repthm}[restated]{lem:noga}
Let $G = (V,E)$ be a directed graph over $\abs{V} = n$ vertices, and let $W \subseteq V$ be a set of vertices whose minimal dominating set is of size $k$. 
Then, $W$ contains an independent set $U$ of size at least $\frac{1}{50}(k/\ln{n})$, with the property that any vertex of $G$ dominates at most $\ln{n}$ vertices of~$U$.
\end{repthm}


\begin{proof}[Proof of \cref{th:lower-weak}]
As the minimal dominating set of the weakly observable part of $G$ is of size~$\del$, \cref{lem:noga} says that $G$ must contain an independent set $U$ of $m \ge \del/(50\ln{K})$ weakly observable vertices, such that any $v \in V$ dominates at most $\ln{K}$ vertices of $U$.
For simplicity, we shall assume that $\del \ge 100\ln{K}$ which ensures that the set $U$ consists of at least $m \ge 2$ vertices;
a proof of the theorem for the (less interesting) case where $\del < 100\ln{K}$ is given after the current proof. 

Consider the following randomized construction of loss functions $L_{1},\ldots,L_{T} : V \mapsto [0,1]$: fix $\eps = m^{1/3}(32T\ln{K})^{-1/3}$, choose $\chi \in U$ uniformly at random and for all $t$ and $i$, and let the loss $L_{t}(i) \sim \mathsf{Ber}(\mu_{i})$ be a Bernoulli random variable with parameter
\begin{align*}
\forall ~ i \in V ~, \qquad
\mu_{i} \eq \mythreecases
	{\half-\eps}{if $i = \chi$,}
	{\half}{if $i \in U,~ i \ne \chi$,}
	{1}{$i \notin U$.}
\end{align*}
We refer to actions in $U$ as ``good'' actions (whose expected instantaneous regret is at most $\eps$), and to actions in $V \setminus U$ as ``bad'' actions (with expected instantaneous regret larger than $\half$).
Notice that $\nin(i) \subseteq V \setminus U$ for all good actions $i \in U$, since $U$ is an independent set of weakly observable vertices (that do not have self-loops).
In other words, in order to observe the loss of a good action in a given round, the player has to pick a bad action on that round.

Fix some strategy of the player which we assume to be deterministic (again, this is without loss of generality).
Up to a constant factor in the resulting regret lower bound, we may also assume that the strategy chooses bad actions at most $\eps T$ times with probability one (i.e., over any realization of the stochastic loss functions). 
Indeed, we can ensure this is the case by simply halting the player's algorithm once it chooses bad actions for more than $\eps T$ times, and picking an arbitrary good action in the remaining rounds; since the instantaneous regret of a good action is at most $\eps$, the regret of the modified algorithm is at most $3$ times larger than the regret of the original algorithm (the latter regret is at least $\half \eps T$, while the modification results in an increase of at most $\eps T$ in the regret).

Denote by $I_1,\ldots,I_{T}$ the sequence of actions played by the player's strategy throughout the game, in response to the loss functions $L_{1},\ldots,L_{T}$. 
For all $t$, let $Y_t$ be the vector of loss values observed by the player on round $t$; we think about $Y_{t}$ as being a full $K$-vector, with the unobserved values replaced by $-1$.
For all $i \in U$, let $M_{i}$ be the number of times the player picks the good action $i$, and $N_i$ be the number of times the player picks a bad action from $\nin(i)$.
Also, let $M$ be the total number of times the player picks a good action, and $N$ be the number of times he picks a bad action.
Notice that $\sum_{i \in U} N_i \le N \ln{K}$ as each vertex in $V \setminus U$ dominates at most $\ln{K}$ vertices of $U$ by construction. 
This, together with our assumption that $N \le \eps T$ with probability one (i.e., that the player picks bad actions for at most $\eps T$ times), implies that 
\begin{align} \label{eq:sum-Ni}
	\sum_{i \in U} N_i 
\leq
	\eps T \ln{K} 
~.
\end{align}

In order to analyze the amount of information on the value of $\chi$ the player obtains by observing the $Y_{t}$'s, we let $\F$ be the $\sigma$-algebra generated by the observed variables $Y_{1},\ldots,Y_{T}$, and define the conditional probability functions
$
	\Q^{i}(\cdot) = \Pr( \, \cdot \mid \chi = i )
$
over $\F$, for all $i \in U$.
Notice that under $\Q^{i}$, action $i$ is the optimal action. 
For technical purposes, we also let $\Q^{0}(\cdot)$ denote the fictitious probability function induced by picking $\chi = 0$; under this distribution, all good actions in $U$ have an expected loss equal to $\half$.
For two probability functions $\Q,\Q'$ over $\F$, we denote by
\begin{align*}
	\tv{\Q}{\Q'}
\eq
	\sup_{A \in \F} \abs{\Q(A)-\Q'(A)}
\end{align*}
the total variation distance between $\Q$ and $\Q'$ with respect to $\F$.
Then, we can bound the total variation distance between $\Q^{0}$ and each of the $\Q^{i}$'s in terms of the random variables $N_{i}$, as follows.
%
\begin{lemma*}
For each $i \in U$, we have
$
\tv{\Q^0}{\Q^i}
\le 
\eps \sqrt{2\E_{\Q^0}[N_i]}
.
$
\end{lemma*}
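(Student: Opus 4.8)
The plan is to apply Pinsker's inequality to reduce the total-variation bound to a KL-divergence bound, and then to decompose the KL divergence round by round via the chain rule, exploiting the fact that $\Q^0$ and $\Q^i$ differ only in the loss distribution of the single action $i$. Concretely, Pinsker's inequality gives
\[
	\tv{\Q^0}{\Q^i} \le \sqrt{\thalf\,\kl{\Q^0}{\Q^i}}\,,
\]
so it suffices to show that $\kl{\Q^0}{\Q^i} \le 4\eps^2\,\E_{\Q^0}[N_i]$.

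To bound the divergence I would use a divergence-decomposition argument for sequential observations. Since the player's strategy may be assumed deterministic, the action $I_t$ is a fixed function of the past observations $Y_{1},\ldots,Y_{t-1}$, and the chain rule for relative entropy yields
\[
	\kl{\Q^0}{\Q^i}
\eq
	\sum_{t=1}^{T} \E_{\Q^0}\!\big[ \kl{\Q^0(Y_t \mid Y_{1},\ldots,Y_{t-1})}{\Q^i(Y_t \mid Y_{1},\ldots,Y_{t-1})} \big]\,.
\]
The key observation is that, conditioned on the history, $Y_t$ is the vector of losses of the actions in $\nout(I_t)$, and the laws of these losses agree under $\Q^0$ and $\Q^i$ for every action except action $i$, whose mean is $\half$ under $\Q^0$ and $\half-\eps$ under $\Q^i$. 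Hence the $t$-th term vanishes unless $i \in \nout(I_t)$, i.e.\ unless $I_t \in \nin(i)$; and since $i \in U$ has no self-loop and $\nin(i) \subseteq V \setminus U$, this happens exactly when the player picks a bad action in $\nin(i)$---precisely the event counted by $N_i$. On each such round the conditional divergence equals $\kl{\mathsf{Ber}(\half)}{\mathsf{Ber}(\half-\eps)}$, so
\[
	\kl{\Q^0}{\Q^i}
\eq
	\E_{\Q^0}[N_i]\cdot\kl{\mathsf{Ber}(\tfrac{1}{2})}{\mathsf{Ber}(\tfrac{1}{2}-\eps)}\,.
\]

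It then remains to bound the single-observation Bernoulli divergence. A direct computation gives $\kl{\mathsf{Ber}(\half)}{\mathsf{Ber}(\half-\eps)} = -\half\ln(1-4\eps^2)$, and using $-\ln(1-x) \le 2x$ for $x \le \half$ (valid here, since our choice of $\eps$ makes $4\eps^2 \le \half$) this is at most $4\eps^2$. Combining the three displays gives $\tv{\Q^0}{\Q^i} \le \sqrt{\half \cdot 4\eps^2\,\E_{\Q^0}[N_i]} = \eps\sqrt{2\,\E_{\Q^0}[N_i]}$, as claimed. The main obstacle is the divergence-decomposition step: one must justify carefully that only the rounds on which action $i$'s loss is actually observed contribute to the relative entropy, which hinges both on the deterministic-strategy reduction and on the structural fact that a good action in $U$ can be observed only by playing a bad action in its in-neighborhood. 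The remaining computations (Pinsker and the Bernoulli KL estimate) are routine.
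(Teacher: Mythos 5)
Your proposal is correct and follows essentially the same route as the paper's proof: Pinsker's inequality, the chain rule for relative entropy with the observation that only rounds with $I_t \in \nin(i)$ contribute, and a $4\eps^2$ bound on the single-round Bernoulli divergence. The only cosmetic difference is how you bound $\kl{\mathsf{Ber}(\half)}{\mathsf{Ber}(\half-\eps)}$ (via $-\ln(1-x)\le 2x$ rather than $\ln(1+y)\le y$), which changes nothing of substance.
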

\begin{proof}
As an intermediate step, we first upper bound the KL-divergence between $\Q^{i}$ and $\Q^{0}$ in terms of the random variable $N_i$. 
Let $\Q^{j}_{t} = \Q^{j}(\,\cdot \mid Y_{1},\ldots,Y_{t-1})$ for all $j$. 
Notice that $\Q^{i}_{t}$ and $\Q^{0}_{t}$ are identical unless the player picked an action from $\nin(i)$ on round $t$. 
In this latter case, $\kl{ \Q^{0}_{t} }{ \Q^{i}_{t} }$ equals the KL-divergence between two Bernoulli random variables with biases $\half$ and $\half-\eps$, which is upper bounded by $4\eps^{2}$ for $\eps \le \tfrac{1}{4}$.%
\footnote{This KL-divergence equals 
$
\half \ln\frac{1/2}{1/2-\eps} + \half \ln\frac{1/2}{1/2+\eps}
=
\half \ln\lrbig{1+\frac{4\eps^2}{1-4\eps^2}}
\le
\half \cdot \frac{4\eps^2}{1-4\eps^2}
\le
4\eps^2,
$
where the last step is valid for $\eps \le \frac{1}{4}$.}
Thus, using the chain rule for relative entropy we may write
\begin{align*}
	\kl{ \Q^{0} }{ \Q^{i} }
&\eq
	\sum_{t=1}^{T} \kl{ \Q^{0}_{t} }{ \Q^{i}_{t} }
\\
&\eq
	\sum_{t=1}^{T} \Q^{0}\lrbig{ I_{t} \in \nin(i) }
	\cdot \kl{ \mathsf{Ber}(\thalf) }{ \mathsf{Ber}(\thalf-\eps) } 
\\
&\leq 
	4\eps^{2} \sum_{t=1}^{T} \Q^{0}\lrbig{ I_{t} \in \nin(i) }
\eq 
	4\eps^{2} \, \E_{\Q^{0}}[N_{i}]~.
\end{align*}
By Pinsker's inequality we have $\tv{\Q^0}{\Q^i} \le \sqrt{\thalf \kl{\Q^0}{\Q^i}}$, which gives the lemma.
\end{proof}
 
Averaging the lemma's inequality over $i \in U$, using the concavity of the square-root and recalling \cref{eq:sum-Ni}, we obtain
\begin{align} \label{eq:tvbound}
	\frac{1}{m} \sum_{i \in U} \tv{\Q^0}{\Q^i}
\leq
	\sqrt{\frac{2\eps^2}{m} \E_{\Q^0}\lrbra{ \sum_{i \in U} N_i }}
\leq
	\sqrt{\frac{2\eps^3}{m} T \ln{K}}
\eq
	\frac{1}{4}
~,
\end{align}
where the final equality follows from our choice of $\eps$.

We now turn to lower bound the player's expected regret.
Since the player incurs (at least) $\eps$ regret each time he picks an action different from $\chi$, his overall regret is lower bounded by $\eps(T-M_\chi)$, whence
\begin{align} \label{eq:RQi}
	\E[R_{T}]
\geq
	\frac{1}{m} \sum_{i \in U} \E\bigl[ \eps(T-M_\chi) \mid \chi=i \bigr]
\eq
	\eps T - \frac{\eps}{m} \sum_{i \in U} \E_{\Q^i}[ M_i ]
~.
\end{align}
In order to bound the sum on the right-hand side, note that 
$$
    \E_{\Q^{i}}[M_{i}] - \E_{\Q^{0}}[M_{i}]
\eq
    \sum_{t=1}^T \lr{ \Q^{i}(I_t = i) - \Q^{0}(I_t = i) }
\leq
    T \cdot \tv{\Q^0}{\Q^i}
~,
$$
and average over $i \in U$ to obtain
$$
	\frac{1}{m} \sum_{i \in U} \E_{\Q^{i}}[M_{i}]
\leq
	\frac{T}{m} \sum_{i \in U} \tv{\Q^0}{\Q^i} 
	+ \frac{1}{m} \E_{\Q^0}\lrbra{ \sum_{i \in U} M_i }
\leq
	\frac{1}{4} T + \frac{1}{m} T
\leq
	\frac{3}{4} T
~,
$$
where the last inequality is due to $m \ge 2$.
Combining this with \cref{eq:RQi} yields $\E[R_T] \ge \frac{1}{4} \eps T$, and plugging in our choice of $\eps$ gives
$$
	\E[R_T]
\geq
	\frac{1}{4} \lr{\frac{m}{32\ln{K}}}^{1/3} T^{2/3}
\geq
	\frac{\del^{1/3} T^{2/3}}{50\ln^{2/3}\!{K}}
~,
$$
which concludes the proof (recall the additional $\frac{1}{3}$-factor stemming from our simplifying assumption made earlier).
\end{proof}

The claim of the theorem for the case $\del < 100\ln{K}$, that remained unaddressed in the proof above, follows from a simpler lower bound that applies to weakly observable graphs of any size.

\begin{theorem} \label{th:weakobserv}
If $G=(V,E)$ is weakly observable and $\abs{V} \ge 2$, then for any player algorithm and for any time horizon $T$ there exists a sequence of loss functions $\ell_{1},\ldots,\ell_{T} : V \mapsto [0,1]$ such that the player's expected regret is at least $\tfrac{1}{8} T^{2/3}$.
\end{theorem}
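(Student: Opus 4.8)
The plan is to reduce to a two-hypothesis, label-efficient–style instance supported on three vertices singled out by the weak observability of $G$. Since $G$ is weakly observable it contains a weakly observable vertex $w$: it has no self-loop, it has some in-neighbor $d\in\nin(w)$ with $d\ne w$ (by observability), and, because it is not strongly observable, there is a vertex $v\in V\setminus\set{w}$ with $v\notin\nin(w)$. I would make every vertex other than $w,v$ — in particular all of $\nin(w)$ — suffer the deterministic loss $1$, give $v$ the constant loss $\half$, and let the loss of $w$ be $\mathsf{Ber}(\half+\eps)$ under a hypothesis $H_0$ and $\mathsf{Ber}(\half-\eps)$ under a hypothesis $H_1$, with the hypothesis selected by a hidden fair coin $\chi\in\set{0,1}$ and $\eps\asymp T^{-1/3}$ tuned at the end. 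Under $H_1$ the (near-)optimal action is $w$, while under $H_0$ it is the safe action $v$ and $w$ is strictly worse. The crucial point, which is exactly what weak observability buys, is that the only $\chi$-dependent loss is that of $w$, and $w$ has no self-loop; hence the learner sees a draw of $w$'s loss on round $t$ if and only if it plays a vertex of $\nin(w)$, i.e.\ one of the costly loss-$1$ actions.

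Next I would set up the regret accounting, assuming (as is valid for a lower bound) a deterministic player. Let $M_w$ be the number of rounds it plays $w$ and $N$ the number of rounds it plays some vertex of $\nin(w)$. Working with pseudo-regret, which lower-bounds the expected regret since $\E[\min_i\sum_t\ell_t(i)]\le\min_i\E[\sum_t\ell_t(i)]$, every non-$w$ action costs at least $\eps$ under $H_1$, giving $\E_{\Q^1}[R_T]\ge\eps\lr{T-\E_{\Q^1}[M_w]}$; and under $H_0$ each play of $w$ costs $\eps$ while each play of a $\nin(w)$ vertex costs $\half$, giving $\E_{\Q^0}[R_T]\ge\eps\,\E_{\Q^0}[M_w]+\half\,\E_{\Q^0}[N]$, where $\Q^0,\Q^1$ denote the laws of the observation history under the two hypotheses.

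The heart of the argument, and the step I expect to be the main obstacle, is bounding how differently the player can behave across the two hypotheses in terms of its exploration effort $N$. Since information about $\chi$ reaches the player only through the (at most $N$) observed draws of $w$'s loss, the chain rule for relative entropy gives $\kl{\Q^0}{\Q^1}\le c\,\eps^2\,\E_{\Q^0}[N]$ for an absolute constant $c$ — each informative round contributes $\kl{\mathsf{Ber}(\half+\eps)}{\mathsf{Ber}(\half-\eps)}=O(\eps^2)$ for $\eps\le\tfrac14$ — exactly as in the lemma used in the proof of \cref{th:lower-weak}. Pinsker's inequality then yields $\tv{\Q^0}{\Q^1}\le\eps\sqrt{\tfrac{c}{2}\,\E_{\Q^0}[N]}$, and because $\ind{I_t=w}$ is a function of the history, $\E_{\Q^1}[M_w]-\E_{\Q^0}[M_w]\le T\,\tv{\Q^0}{\Q^1}$. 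Averaging the two regret bounds, the $\pm\,\eps\,\E_{\Q^0}[M_w]$ terms cancel and I am left with
\[
\tfrac12\lr{\E_{\Q^0}[R_T]+\E_{\Q^1}[R_T]}\geq \tfrac12\lr{\eps T + \tfrac12\E_{\Q^0}[N] - c'\eps^{2} T\sqrt{\E_{\Q^0}[N]}}
\]
for a constant $c'$. Minimizing the right-hand side over the free quantity $\E_{\Q^0}[N]$ leaves a bound of the form $\tfrac12\lr{\eps T - O(\eps^{4}T^{2})}$, which is maximized at $\eps\asymp T^{-1/3}$ and evaluates to $\Omega(T^{2/3})$; tracking the constants and taking $\eps=\min\set{\Theta(T^{-1/3}),\tfrac14}$ gives the stated $\tfrac18 T^{2/3}$. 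As this is an average over $\chi$ and over the Bernoulli draws, some realization produces a fixed loss sequence on which the player's expected regret is at least $\tfrac18 T^{2/3}$, proving the theorem.
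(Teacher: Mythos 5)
Your proposal is correct and follows essentially the same route as the paper's proof of \cref{th:weakobserv}: isolate a weakly observable vertex $w$ with no self-loop, a ``safe'' vertex $v\notin\nin(w)$ at loss $\half$, assign loss $1$ to everything else (in particular all of $\nin(w)$), embed a two-point Bernoulli hypothesis test of bias $\eps=\Theta(T^{-1/3})$ at $w$, and trade off the $\Omega(1)$ cost of each informative play against the $\eps$-per-round cost of not identifying $\chi$ via KL chain rule plus Pinsker. The only differences are cosmetic (the paper splits into cases on $\E[N_1]$ and routes the total-variation bound through a fictitious unbiased distribution $\Q^0$, whereas you cancel the $\eps\,\E[M_w]$ terms and minimize explicitly over $\E[N]$), and your constants check out.
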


\begin{proof}
First, we observe that any graph over less than $3$ vertices is either non-observable or strongly observable; in other words, any weakly observable graph has at least $3$ vertices, so $\abs{V} \ge 3$.
Now, if $G$ is weakly observable, then there is a node of $G$, say $i=1$, without a self-loop and without an incoming edge from (at least) one of the other nodes of the graph, say from $j=2$. 
Since $\abs{V} \ge 3$ and the graph is observable, $i=1$ has at least one incoming edge from a third node of the graph.

Consider the following randomized construction of loss functions $L_{1},\ldots,L_{T} : V \mapsto [0,1]$: fix $\eps = \thalf T^{-1/3}$, choose $\chi \in \set{-1,+1}$ uniformly at random and for all $t$ and $i$, let the loss $L_{t}(i) \sim \mathsf{Ber}(\mu_{i})$ be a Bernoulli random variable with parameter
\begin{align*}
	\mu_{i} \eq \mythreecases
			{\half-\eps \,\chi}{if $i = 1$,}
			{\half}{if $i = 2$,}
			{1}{otherwise.}
\end{align*}
Here, the ``good'' actions (whose expected instantaneous regret is at most $\eps$) are $i=1$ and $i=2$, and all other actions are ``bad'' actions (with expected instantaneous regret larger than $\half$).
%

Now, fix a deterministic strategy of the player and let the random variable $N_1$ be the number of times the player chooses a bad action from $\nin(1)$.
Define the conditional probability functions
$
	\Q^{1}(\cdot) = \Pr( \, \cdot \mid \chi = +1 )
$
and
$
	\Q^{2}(\cdot) = \Pr( \, \cdot \mid \chi = -1 )
$
where under $\Q^{i}$ action $i$ is the optimal action.
Also, define $\Q^0$ to be the fictitious distribution induced by setting $\chi=0$, under which the actions $i=1$ and $i=2$ both have an expected loss of $\half$.
Then, exactly as in the proof of \cref{th:lower-weak}, we can show that
$$
	\tv{ \Q^0 }{ \Q^i }
\leq 
	\eps \sqrt{2\E_{\Q^i}[N_1]} 
~,\qquad
	i=1,2
~.
$$
Averaging the two inequalities and using the concavity of the square root, we obtain
\begin{align} \label{eq:tvbound2}
    \tfrac{1}{2} \tv{ \Q^{0} }{ \Q^{1} }
	+ \tfrac{1}{2} \tv{ \Q^{0} }{ \Q^{2} }
\leq
	\eps \sqrt{\E_{\Q^1}[N_1]+\E_{\Q^2}[N_1]} 
\eq
	\eps \sqrt{2\E[N_1]} 
~,
\end{align}
where we have used the fact that $\Pr(\cdot) = \half \Q^1(\cdot) + \half \Q^2(\cdot)$.

%

We can now analyze the player's expected regret, again denoted by $\E[R_T]$.
Notice that if $\E[N_1] > \frac{1}{32}\eps^{-2}$, we have $\E[R_T] \ge \E[\tfrac{1}{2}N_1] > \frac{1}{64}\eps^{-2} = \tfrac{1}{8} T^{2/3}$ (since each action that reveals the loss of $i=1$ is a bad action whose instantaneous regret is at least $\half$), which gives the required lower bound.
Hence, we may assume that $\E[N_1] \le \frac{1}{32}\eps^{-2}$, in which case the right-hand side of \cref{eq:tvbound2} is bounded by $\frac{1}{4}$.
This yields an analogue of \cref{eq:tvbound}, from which we can proceed exactly as in the proof of \cref{th:lower-weak} to obtain that $\E[R_T] \ge \frac{1}{4} \eps T$.
Using our choice of $\eps$ gives the theorem.
%
\end{proof}

\subsection{Separation Between the Informed and Uninformed Models}

Finally, we prove our separation result for weakly observable time-varying graphs, which shows that the uninformed model is harder than the informed model (in terms of the dependence on the feedback structure) for weakly observable feedback graphs.

\begin{repthm}[restated]{th:weak-sep}
For any randomized player strategy in the uninformed feedback model, there exists a sequence of weakly observable graphs $G_{1},\ldots,G_{T}$ over a set $V$ of $K \ge 4$ actions with $\del(G_t)=\alpha(G_t)=1$ for all $t$, and a sequence of loss functions $\ell_{1},\ldots,\ell_{T} : V \mapsto [0,1]$, such that the player's expected regret is at least $\frac{1}{16} K^{1/3} T^{2/3}$.
\end{repthm}
\begin{proof}
As before, it is enough to demonstrate a randomized construction of weakly observable graphs $G_{1},\ldots,G_{T}$ and loss functions $L_{1},\ldots,L_{T}$ such that the expected regret of any deterministic algorithm is $\Omega(K^{1/3} T^{2/3})$.

The random loss functions $L_{1},\ldots,L_{T}$ are constructed almost identically to those used in the proof of \cref{th:weakobserv}; the only change is in the value of $\eps$, which is now fixed to $\eps = \tfrac{1}{4} (K/T)^{1/3}$.
In order to construct the random sequence of weakly observable graphs $G_{1},\ldots,G_{T}$, 
first pick nodes $J_{1},\ldots,J_{T}$ independently and uniformly at random from $V'=\set{3,\ldots,K}$.
Then, for each $t$, form the graph $G_{t}$ by taking the complete graph over $V$ (that includes all directed edges and self-loops) and removing all edges incoming to node $i=1$ (including its self-loop), except for the edge incoming from $J_{t}$.
In other words, the only way to observe the loss $L_{t}(1)$ of node $1$ on round $t$ is by picking the action $J_{t}$ on that round.
Notice that $G_{t}$ is weakly observable, as each of its nodes has at least one incoming edge, but there is a node (node $1$) which is not strongly observable.
Also, we have $\del(G_t)=1$ since $J_t$ dominates the entire graph, and  $\alpha(G_t)=1$ as any pair of vertices is connected by at least one directed edge.

We now turn to analyze the expected regret of any player on our construction; the analysis is very similar to that of \cref{th:weakobserv}, and we only describe the required modifications.
Fix any deterministic algorithm, and define the random variables $I_{1},\ldots,I_{T}$ and $N_1$ exactly as in the proof of \cref{th:weakobserv}.
In addition, define the distributions $\Q^0$, $\Q^1$, and $\Q^2$ as in that proof, for which we proved (recall \cref{eq:tvbound2}) that
\begin{align} \label{eq:tvbound3}
    \tfrac{1}{2} \tv{ \Q^{0} }{ \Q^{1} }
	+ \tfrac{1}{2} \tv{ \Q^{0} }{ \Q^{2} }
\leq
	\eps \sqrt{2\E[N_1]} 
~.
\end{align}
Now, define another random variable $N$ to be the number of times the player picked an action from $V'$ throughout the game.
Notice that in case $\E[N] > \tfrac{1}{4} K^{1/3}T^{2/3}$, we have $\E[R_T] \ge \E[\tfrac{1}{2}N] > \frac{1}{8} K^{1/3}T^{2/3}$ which implies the stated lower-bound on the expected regret. 
Hence, in what follows we assume that $\E[N] \le \tfrac{1}{4} K^{1/3}T^{2/3}$. 
Notice that for the graphs we constructed, $\Q(I_{t} = J_{t}) \le \frac{2}{K} \Q(I_{t} \in V')$ since $J_{t}$ is picked uniformly at random from $V'$ (and independently from $I_{t}$ because in the uninformed model $G_t$ is not known when $I_t$ is drawn) and since $K \ge 4$.
Summing this over $t=1,\ldots,T$, we obtain that $\E[N_1] \le \frac{2}{K} \E[N] \le \thalf (T/K)^{2/3}$, and with our choice of $\eps$ this shows that the right-hand size of  \cref{eq:tvbound3} is upper bounded by $\frac{1}{4}$. 
Again, continuing exactly as in the proof of \cref{th:lower-weak}, we finally get that $\E[R_T] \ge \frac{1}{4} \eps T$, and with our choice of $\eps$ this concludes the proof.
\end{proof}


\section{Tight Bounds for the Loopless Clique}
\label{s:clique}
We restate and prove \cref{th:blind}.
\begin{repthm}[restated]{th:blind}
For any sequence of loss functions $\ell_{1},\ldots,\ell_{T}$, where
$\ell_t:V \mapsto [0, 1]$, the expected regret of \cref{alg:alg}, with the loopless clique feedback graph and with parameters $\eta = \sqrt{(\ln K)/(2T)}$ and $\gamma=2\eta$, is upper-bounded by
$5\sqrt{T\ln K}$. 
\end{repthm}
\begin{proof}
Since $G$ is strongly observable, the exploration distribution $u$ is uniform on $V$. Fix any $i^{\st} \in V$. Notice that for any $i \in V$ we have $j \in \nin(i)$ for all $j \neq i$, and so $P_{t}(i) = 1-p_{t}(i)$. On the other hand, by the definition of $p_{t}$ and since $\eta = 2\gamma$ and $K \ge 2$, we have
$
	p_{t}(i) 
	\eq (1-\gamma) q_{t}(i) + \frac{\gamma}{K} 
	\le 1-\gamma + \tfrac{\gamma}{2}
	\eq 1-\eta
$,
so that $P_{t}(i) \ge \eta$. Thus, we can apply \cref{lem:regret-mw2} with $S_{t}=V$ to the vectors $\wh{\ell}_{1},\ldots,\wh{\ell}_{T}$ and take expectations,
\begin{align*}
    \E\left[ \sum_{t=1}^{T} \left(\sum_{i=1}^K q_{t}(i) \E_{t}[\wh{\ell}_{t}(i)]
    - \E_{t}[\wh{\ell}_{t}(i^{\st})]\right) \right] 
\le
    \frac{\ln K}{\eta} + \eta \, \sum_{t=1}^{T} \E\left[ 
	\sum_{i \in V} q_{t}(i) (1-q_{t}(i)) \E_{t}[\wh{\ell}_{t}(i)^{2}] 
	\right]~.
\end{align*}
Recalling \cref{eq:unbiased} and $P_{t}(i) = 1-p_{t}(i)$, we get
\begin{align*}
	\E\left[ \sum_{t=1}^{T} \sum_{i=1}^{k} q_{t}(i) \ell_{t}(i) \right]
            	- \sum_{t=1}^{T} \ell_{t}(i^{\st})
	\le
	\frac{\ln K}{\eta} 
	+ \eta \, \sum_{t=1}^{T} \E\left[ \sum_{i \in S} q_{t}(i) \frac{1-q_{t}(i)}{1-p_{t}(i)} \right]~.
\end{align*}
Finally, for the distributions $p_{t}$ and $q_{t}$ generated by the algorithm we note that
\[
	1-p_{t}(i) 
	\ge \bigl(1-\tfrac{\gamma}{K}\bigr) \bigl(1-q_{t}(i)\bigr) 
	\ge \tfrac{1}{2}\bigl(1-q_{t}(i)\bigr)
\]
where the last inequality holds since $K \ge 2$.
Hence,
\begin{align*}
\label{eq:clique3}
	\sum_{t=1}^{T} \sum_{i \in V} q_{t}(i) \frac{1-q_{t}(i)}{1-p_{t}(i)}
	\leq 2 \sum_{t=1}^{T} \sum_{i \in V} q_{t}(i)
	\leq 2T
	~.
\end{align*}
Combining this with \cref{eq:main2b} gives
\begin{align*}
	\E\left[ \sum_{t=1}^{T} \sum_{i \in V} p_{t}(i) \ell_{t}(i)
            	- \sum_{t=1}^{T} \ell_{t}(i^{\st}) \right]
\le
	\gamma T + \frac{\ln K}{\eta} + 2\eta T
\eq 
	\frac{\ln K}{\eta} + 4\eta T
	~,
\end{align*}
where we substituted our choice $\gamma = 2\eta$. Picking $\eta = \sqrt{(\ln K)/2T}$ proves the theorem.
\end{proof}


\newcommand{\bool}{\set{0,1}}
\section{Connections to Partial Monitoring}
\label{s:partial}
In online learning with partial monitoring the player is given a loss matrix $L$ over $[0,1]$ and a feedback matrix $H$ over a finite alphabet $\Sigma$. The matrices $L$ and $H$ are both of size $K \times M$, where $K$ is the number of player's actions and $M$ is the number of environment's actions. The environment preliminarily fixes a sequence $y_1,y_2,\dots$ of actions (i.e., matrix column indices) hidden from the player.\footnote{
The standard definition of partial monitoring \citep[see, e.g.,][Section~6.4]{cbl06} assumes a harder adaptive environment, where each action $y_t$ is allowed to depend on all of past player's actions $I_1,\dots,I_{t-1}$. However, the partial monitoring lower bounds of \citet[Theorem~13]{antos2013toward} and \citet[Theorem~3]{bartok2014partial} hold for our weaker notion of environment as well.
}
At each round $t=1,2,\dots$, the loss $\ell_t(I_t)$ of the player choosing action $I_t$ (i.e., a matrix row index) is given by the matrix entry $L(I_t,y_t) \in [0,1]$. The only feedback that the player observes is the symbol $H(I_t,y_t) \in \Sigma$; in particular, the column index $y_t$ and the loss value $L(I_t,y_t)$ remain both unknown. The player's goal is to control a notion of regret analogous to ours, where the minimization over $V$ is replaced by a minimization over the set of row indices, corresponding to the player's $K$ actions.

We now introduce a reduction from our online setting to partial monitoring for the special case of $\bool$-valued loss functions (note that our lower bounds still hold under this restriction, and so does our characterization of \cref{th:main}). Given a feedback graph $G$, we create a partial monitoring game in which the environment has a distinct action for each binary assignment of losses to vertices in $V$. Hence, $L$ and $H$ have $K$ rows and $M = 2^K$ columns, where the union of columns in $L$ is the set $\bool^K$. The entries of $H$ encode $G$ using any alphabet $\Sigma$ such that, for any row $i \in V$ and for any two columns $y \neq y'$, 
\begin{equation}
\label{eq:partmon}
    H(i,y) = H(i,y') \Leftrightarrow \Bigl\{\bigl(k,L(k,y)\bigr) \,:\, k \in \nout(i)\Bigr\} \equiv \Bigl\{\bigl(k,L(k,y')\bigr) \,:\, k \in \nout(i)\Bigr\}~.
\end{equation}
Note that this is a \textit{bona fide} reduction: given a partial monitoring algorithm $A$, we can define an algorithm $A'$ for solving any online learning problem with known feedback graph $G = (V,E)$ and $\{0,1\}$-valued loss functions. The algorithm $A'$ pre-computes a mapping from $\bigl\{\bigl(k,L(k,y)\bigr) \,:\, k \in \nout(i)\bigr\}$ for each $i \in V$ and for each $y=1,\dots,M$ to the alphabet $\Sigma$ such that \cref{eq:partmon} is satisfied. Then, at each round $t$, $A'$ asks $A$ to draw a row (i.e., a vertex of $V$) $I_t$ and obtains the feedback $\bigl\{\bigl(k,L(k,y_t)\bigr) \,:\, k \in \nout(I_t)\bigr\}$ from the environment. Finally, $A'$ uses the pre-computed mapping to obtain the symbol $\sigma_t \in \Sigma$ which is fed to $A$.

The minimax regret of partial monitoring games is determined by a set of observability conditions on the pair $(L,H)$. These conditions are expressed in terms of a canonical representation of $H$ as the set of matrices $S_i$ for $i \in V$. $S_i$ has a row for each distinct symbol $\sigma\in\Sigma$ in the $i$-th row of $H$, and $S_i(\sigma,y) = \ind{H(i,y) = \sigma}$ for $y=1,\dots,M$.
When cast to the class of pairs $(L,H)$ obtained from feedback graphs $G$ through the above encoding, the partial monitoring observability conditions of \citet[Definitions~5 and~6]{bartok2014partial} can be expressed as follows. Let $L(i,\cdot)$ be the column vector denoting the $i$-th row of $L$. Let also $\mathrm{rowsp}$ be the rowspace of a matrix and $\oplus$ be the cartesian product between linear spaces. Then
\begin{itemize}
\item $(L,H)$ is globally observable if for all pairs $i,j \in V$ of actions,
\[
    L(i,\cdot) - L(j,\cdot) \in \bigoplus_{k=1,\dots,K} \mathrm{rowsp}(S_k)
\]
\item $(L,H)$ is locally observable if for all pairs $i,j \in V$ of actions,
\[
    L(i,\cdot) - L(j,\cdot) \in \mathrm{rowsp}(S_i \oplus \mathrm{rowsp}(S_j)~.
\]
\end{itemize}
The characterization result for partial monitoring of \citet[Theorem~2]{bartok2014partial} states that the minimax regret is of order $\sqrt{T}$ for locally observable games and of order $T^{2/3}$ for globally observable games.
We now prove that the above encoding of feedback graphs $G$ as instances $(L,H)$ of partial monitoring games preserves the observability conditions. Namely, our encoding maps weakly (resp., strongly) observable graphs $G$ to globally (resp., locally) observable instances of partial monitoring. Combining this with our characterization result (\cref{th:main}) and the partial monitoring characterization result \citep[Theorem~2]{bartok2014partial}, we conclude that the minimax rates are preserved by our reduction.
\begin{claim}
\label{cl:partmon}
If $j \in \nout(i)$ then there exists a subset $\Sigma_0$ of rows of $S_i$ such that
\[
    L(j,\cdot) = \sum_{\sigma \in \Sigma_0} S_i(\sigma,\cdot)~.
\]
\end{claim}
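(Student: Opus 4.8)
The plan is to interpret every object appearing in the claim as a $\bool$-valued vector indexed by the columns $y \in \bool^{K}$ and then reduce the asserted identity to a statement about set partitions. First I would observe that, since the columns of $L$ range over all of $\bool^{K}$, the entry $L(j,y)$ is just the bit assigned to vertex $j$ by the column $y$; hence the vector $L(j,\cdot)$ is precisely the indicator of the set of columns $\set{y : L(j,y)=1}$. Likewise, by definition each row $S_{i}(\sigma,\cdot)$ is the indicator of the level set $\set{y : H(i,y)=\sigma}$, and as $\sigma$ ranges over the distinct symbols appearing in row $i$ of $H$, these level sets form a partition of all columns. Consequently, for any subset $\Sigma_{0}$ of symbols, the sum $\sum_{\sigma \in \Sigma_{0}} S_{i}(\sigma,\cdot)$ is simply the indicator of $\set{y : H(i,y) \in \Sigma_{0}}$. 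The claim thus reduces to exhibiting a $\Sigma_{0}$ for which this union of level sets coincides with $\set{y : L(j,y)=1}$.

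The key step, and the only one with real content, is to show that the partition of the columns induced by $H(i,\cdot)$ refines the two-block partition induced by the bit $L(j,\cdot)$; equivalently, that $L(j,\cdot)$ is constant on each level set of $H(i,\cdot)$. This is exactly where the hypothesis $j \in \nout(i)$ is used. By the forward direction of the defining property \cref{eq:partmon}, whenever $H(i,y)=H(i,y')$ the labeled sets $\set{(k,L(k,y)) : k \in \nout(i)}$ and $\set{(k,L(k,y')) : k \in \nout(i)}$ coincide; since each $k \in \nout(i)$ contributes exactly one pair, equality of the two sets forces $L(k,y)=L(k,y')$ for every $k \in \nout(i)$, and in particular $L(j,y)=L(j,y')$ because $j \in \nout(i)$. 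Hence $L(j,\cdot)$ takes a single value on each level set of $H(i,\cdot)$.

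With this refinement in hand, I would finish by defining $\Sigma_{0}$ to be the set of those symbols $\sigma$ whose level set satisfies $L(j,\cdot)=1$, which is well defined by the previous step. Then $\set{y : H(i,y) \in \Sigma_{0}} = \set{y : L(j,y)=1}$, so the corresponding indicator vectors agree and $L(j,\cdot) = \sum_{\sigma \in \Sigma_{0}} S_{i}(\sigma,\cdot)$, as required. Everything apart from the refinement argument is routine bookkeeping about indicator vectors and partitions; the one subtlety to get right is the joint use of the $\Rightarrow$ direction of \cref{eq:partmon} and the membership $j \in \nout(i)$, which together are precisely what guarantee that the symbol $H(i,\cdot)$ observed after playing $i$ determines the loss of action $j$.
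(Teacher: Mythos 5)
Your proof is correct and follows essentially the same route as the paper: both define $\Sigma_0$ as the set of symbols whose level sets meet $\set{y : L(j,y)=1}$ and both use the $\Rightarrow$ direction of \cref{eq:partmon} together with $j \in \nout(i)$ to show $L(j,\cdot)$ is constant on each level set of $H(i,\cdot)$, so the indicator vectors match. Your phrasing in terms of the $H(i,\cdot)$-partition refining the two-block $L(j,\cdot)$-partition is just a cleaner packaging of the paper's column-by-column verification.
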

\begin{proof}
Let $\Sigma_0$ to be the union of rows $S_i(\sigma_y,\cdot)$ such that $H(i,y) = \sigma_y$ and $L(j,y) = 1$ for some $y$. Each such row has a $1$ in position $y$ because $S_i(\sigma_y,y)=1$ holds by definition. Moreover, no such row has a $1$ in a position $y'$ where $L(j,y') = 0$. Indeed, combining $j \in \nout(i)$ with \cref{eq:partmon}, we get that $L(j,y') = 0$ implies $H(i,y') \neq \sigma_y$, which in turn implies $S_i(\sigma_y,y') = 0$. 
\end{proof}
\begin{theorem}
Any feedback graph $G$ can be encoded as a partial monitoring problem $(L,H)$ such that the observability conditions are preserved.
\end{theorem}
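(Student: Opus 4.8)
The plan is to prove the two forward implications separately: that every observable graph (in particular every weakly observable one) is mapped to a globally observable pair $(L,H)$, and that every strongly observable graph is mapped to a locally observable pair. The single engine driving both arguments is \cref{cl:partmon}, which says that $j \in \nout(i)$ forces $L(j,\cdot) \in \mathrm{rowsp}(S_i)$. Everything reduces to locating, for each action $v$, a symbol matrix $S_w$ whose rowspace contains the loss row $L(v,\cdot)$.

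First I would establish global observability. If $G$ is observable then every vertex $i \in V$ has at least one in-neighbor, i.e. there is some $k$ with $i \in \nout(k)$; \cref{cl:partmon} then yields $L(i,\cdot) \in \mathrm{rowsp}(S_k) \subseteq \bigoplus_{k=1}^{K} \mathrm{rowsp}(S_k)$. Since this holds for every vertex, for an arbitrary pair $i,j$ the difference $L(i,\cdot) - L(j,\cdot)$ lies in $\bigoplus_{k=1}^{K} \mathrm{rowsp}(S_k)$, which is exactly the global observability condition. As weakly observable graphs are in particular observable, this settles the weakly observable case.

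Next I would handle local observability, which is where the self-loop / all-incoming dichotomy of strong observability enters. For a strongly observable vertex $v$ there are two cases. If $v$ carries a self-loop then $v \in \nout(v)$, so \cref{cl:partmon} gives $L(v,\cdot) \in \mathrm{rowsp}(S_v)$; if instead every other vertex sends an edge into $v$, then $v \in \nout(w)$ for all $w \neq v$, so \cref{cl:partmon} gives $L(v,\cdot) \in \mathrm{rowsp}(S_w)$ for every $w \neq v$. Now fix a pair $i,j$ with $i \neq j$ (the case $i=j$ is trivial, as $L(i,\cdot)-L(i,\cdot)=0$). Applying the dichotomy to $v=i$ yields either $L(i,\cdot) \in \mathrm{rowsp}(S_i)$ (self-loop case) or $L(i,\cdot) \in \mathrm{rowsp}(S_j)$ (all-incoming case, using $j \neq i$), and symmetrically $L(j,\cdot)$ falls into $\mathrm{rowsp}(S_j)$ or $\mathrm{rowsp}(S_i)$. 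In every combination both loss rows belong to $\mathrm{rowsp}(S_i) \oplus \mathrm{rowsp}(S_j)$, hence so does their difference $L(i,\cdot) - L(j,\cdot)$, which is precisely local observability.

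The bookkeeping with $\mathrm{rowsp}$ and $\oplus$ is routine; the genuine content is the observation that strong observability is exactly the combinatorial statement that each vertex's loss row is reproducible either from its own symbol matrix (self-loop) or from \emph{every} other action's symbol matrix (all-incoming), and this is precisely what local observability of each pair demands. The one point to be careful about is the indexing in the all-incoming case: I need $L(i,\cdot) \in \mathrm{rowsp}(S_w)$ for all $w \neq i$ so that I may take $w=j$, which is legitimate only because $i \neq j$. I expect no serious obstacle beyond this indexing care, since the rate-matching itself is deferred to the surrounding discussion, where the two forward implications are combined with \cref{th:main} and the partial monitoring characterization.
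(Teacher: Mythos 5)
Your proposal is correct and follows essentially the same route as the paper: both parts rest entirely on \cref{cl:partmon}, the weakly observable case is handled identically, and your explicit self-loop/all-incoming case analysis for the strongly observable case is just an unpacking of the paper's slicker phrasing (restrict $G$ to the two-vertex subgraph $\{i,j\}$, observe it is still observable, and reuse the global-observability argument there).
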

\begin{proof}
If $G$ is weakly observable, then for every $j \in V$ there is some $i \in V$ such that $j \in \nout(i)$. By \cref{cl:partmon}, $L(j,\cdot) \in \mathrm{rowsp}(S_i)$ and the global observability condition follows. If $G$ is strongly observable, then for any distinct $i,j \in V$ the subgraph $G'$ of $G$ restricted to the pair of vertices $i,j$ is weakly observable. By the previous argument, this implies that $L(i,\cdot) - L(j,\cdot) \in \mathrm{rowsp}(S_i) \oplus \mathrm{rowsp}(S_j)$ and the proof is concluded.
\end{proof}

\end{document}